\documentclass[10pt,journal,compsoc]{IEEEtran}

\usepackage{algorithm, algorithmicx, algpseudocode}
\usepackage{hyperref}       
\usepackage{url}            
\usepackage{booktabs}       
\usepackage{nicefrac}       
\usepackage{microtype}      
\usepackage{xcolor}         
\usepackage{caption}
\usepackage{amsmath, amssymb, amsfonts, amsthm, mathtools, bbold, relsize}
\usepackage{siunitx}
\usepackage{graphicx}	
\usepackage{textcomp}

\newtheorem{theorem}{Theorem}

\newtheorem{corollary}{Corollary}
\newcommand{\xsup}[1]{{x^{(#1)}}}
\newcommand{\ysup}[1]{{y^{(#1)}}}

\newcommand{\comment}[1]{}

\newcommand{\tr}{\text{tr}\,}
\newcommand{\var}{\text{var}\,}
\newcommand{\cov}{\text{cov}\,}

\newcommand{\norm}[1]{\left\lVert #1 \right\rVert}

\newcommand{\mindN}{\underline{d,N}}
\newcommand{\estoosmse}[2]{\widehat{\text{mse}}_{\text{oos}}(#1,#2)}
\newcommand{\figurewidth}{0.9\linewidth}
\makeatletter
\renewcommand{\maketag@@@}[1]{\hbox{\m@th\normalsize\normalfont#1}}%
\makeatother

\ifCLASSOPTIONcompsoc
  \usepackage[nocompress]{cite}
\else
  \usepackage{cite}
\fi

\begin{document}
\bstctlcite{IEEEexample:BSTcontrol}

\title{Optimal ridge regularization revisited}

\author{Jack Timmermans and Sergio A.\ Alvarez

\IEEEcompsocitemizethanks{\IEEEcompsocthanksitem The authors are with the
Department of Computer Science, Boston College, Chestnut Hill, MA 02467 USA.
J.\ Timmerman's work was supported by an undergraduate research fellowship from the Morrissey College of Arts and Sciences at Boston College. 
\ Correspondence to:\ \, alvarez@bc.edu.
}

} 

\markboth{Preprint}
{Timmermans and Alvarez\MakeLowercase{\textit{}}: Optimal ridge regularization revisited}
%



\IEEEtitleabstractindextext{%
\begin{abstract}
We consider $L^2$-regularized linear (ridge) regression over a finite data sample $X$ with bounded covariance and linear prediction targets $y$ with additive isotropic noise of finite variance. We present an iterative procedure to compute the optimal regularization strength numerically from the generative parameters in the fixed-$X$ setting and prove its convergence at limited noise levels. Our experimental evaluation over synthetic data shows that the proposed procedure combined with sample-based parameter estimates attains near-optimal random-$X$ generalization across a wide range of sample sizes, aspect ratios, and noise levels, at an added computational cost equivalent to one preliminary ridge regression in the underparameterized regime and two in the overparameterized case.
\end{abstract}

\begin{IEEEkeywords}
Supervised learning, statistical learning, hyperparameter optimization, regularization, generalization.
\end{IEEEkeywords}}

\maketitle

\IEEEdisplaynontitleabstractindextext

%



\IEEEraisesectionheading{\section{Introduction}}

\IEEEPARstart{R}
idge regression learns a linear model $y = \hat{\theta}^T x$ to predict a real-valued target variable $y$ from multivariate input data $x \in \mathbb R^{d \times 1}$. 
The model parameter, $\hat \theta \in \mathbb R^{d \times 1}$, is found by empirical risk minimization of the $L^2$-penalized squared loss over a labeled data sample 
(Eq.~\ref{eq:ermRidgeRegressionMatrixForm}),
where the rows of $X \in \mathbb R^{N \times d}$ and $y \in \mathbb R^{N \times 1}$ are inputs $\xsup{i}^T$ and their labels, $\ysup{i}$, and $\lambda$ is a regularization strength hyperparameter.
\begin{align}
    \hat{\theta} = \arg\min_{\eta} 
    \ (X\eta - y)^T(X\eta - y)
    + \lambda \eta^T\eta
    \label{eq:ermRidgeRegressionMatrixForm}
\end{align}
As described in Appendix B of~\cite{patil2024optimal} (note the factor $N$ difference in scaling of $\lambda$ in that paper as compared with~\ref{eq:ermRidgeRegressionMatrixForm}), the objective function of Eq.~\ref{eq:ermRidgeRegressionMatrixForm}
is strictly convex if $\lambda > -\sigma^2_{\text{min}}$ (even if $\lambda < 0$), where $\sigma_{\text{min}}$ is the smallest singular value of $X$. Hence, there is a unique minimum point, which can be found by the first-order stationarity condition of
Eq.~\ref{eq:ridgeParameterExpression} (e.g.,~\cite{hastie_09_elements-of-statistical-learning}), where $A^{\dagger}$
is the Moore-Penrose pseudo-inverse (which reduces to the ordinary inverse in the underparameterized case $d \le N$,
and to the inverse of the Gram matrix $XX^T + \lambda I$ in the overparameterized case $d > N$).
\begin{align}
\hat{\theta} = (X^T X + \lambda I)^{\dagger}X^Ty
\label{eq:ridgeParameterExpression}
\end{align}
Section 3b of~\cite{hoerl1970ridge} notes that the parameter estimate of Eq.~\ref{eq:ridgeParameterExpression} can be characterized as minimizing $L^2$ norm $\norm{\theta}$ subject to constant penalized squared error $\norm{X\theta - y}^2 + \lambda \Vert \theta \Vert^2$, and as minimizing this error subject to constant $L^2$ norm $\Vert \theta \Vert$. 
Eq.~\ref{eq:ridgeParameterExpression} can also be interpreted within a Bayesian framework, as the maximum a posteriori
(MAP) estimate of $\theta$ based on combining a Gaussian conditional likelihood $N(X\theta, I)$ for $y$ with an isotropic Gaussian prior $N(0, \frac{1}{\lambda} I)$ for $\theta$~\cite{DeepLearningBook2016}; thus,
larger ridge penalty strengths $\lambda$ correspond to a narrower prior that biases $\theta$
to be smaller in norm.

Ridge regression was introduced in~\cite{hoerl1962ridge}; similar ideas were described earlier for integral equations~\cite{phillips1962} and more general ill-posed problems~(see~\cite{Tikhonov:1963, tikhonov1977});
the technique is sometimes referred to as Tikhonov regularization in connection with the latter. Additional properties were considered in~\cite{hoerl1970ridge}. The technique remains a staple of statistical machine learning~\cite{hastie_09_elements-of-statistical-learning}, and has been the subject of many publications in the field. Recent theoretical work emphasizes large-sample asymptotics at constant aspect ratio $d/N$ in the overparameterized case, where optimal regularization strength may be negative~\cite{DobribanWagnerHighDAsymptotics2018,kobak2020optimal,WuXuNeurIPS2020,tsigler2023benign,patil2024optimal}.
$L^2$ regularization, more generally, is widely used in machine learning as a mechanism for controlling model capacity;
examples occur in kernel learning~\cite{CortesL2KernelLearning2009,gonen2011multiple} and neural
networks~\cite{L2OverparameterizedNN2021,NEURIPS2020_32fcc8cf}, among many others.
\cite{srivastava2014dropout} showed that dropout can be interpreted as $L^2$ regularization using a
covariance-sensitive norm.

The regularization strength hyperparameter (ridge penalty) $\lambda$ in Eqs.~\ref{eq:ermRidgeRegressionMatrixForm}- \ref{eq:ridgeParameterExpression} is typically selected by a search process over a validation set (e.g.,~\cite{buteneers2013optimized}), requiring multiple ridge regression fits to compare 
generalization error estimates. Several ``plug-in'' choices for $\lambda$ have also been proposed, some dating
back to~\cite{HoerlPlugIn1975}. In practice, these techniques require an initial model fit, 
and typically rely on heuristic estimates rather than exact
descriptions of the optimal regularization strength. Assuming an isotropic random linear generative parameter $\theta$,~\cite{DobribanWagnerHighDAsymptotics2018} describes the asymptotically optimal regularization strength explicitly in terms of the variances of $\theta$ (``signal'') and of the noise, but without accounting for issues that arise for specific finite data samples, including the direction of $\theta$ relative to the covariance geometry of $X$, and sample-based estimation of the generative parameters. 

\subsection*{Scope of this paper}
We consider the problem of efficiently computing the optimal regularization strength, $\lambda$ in Eq.~\ref{eq:ridgeParameterExpression}, based on a given labeled data sample $(X,y)$; this is the value that
minimizes expected out-of-sample in-distribution squared error under the data generation model of Eq.~\ref{eq:dataGenerativeModel}. 
We begin in section~\ref{section:biasVariance} by expressing the expected out-of-sample mean squared error 
analytically in the fixed-X setting, in terms of the sample covariance of $X$ and of the data-generative population parameters for $y$, assuming these are known;
the analysis in section~\ref{section:biasVariance} reflects standard knowledge in the field.
We then derive in section~\ref{section:computingOptimalRegWeight} an equation for the optimal regularization strength and prove that a fixed-point computational procedure that uses the generative parameters (Algorithm~\ref{alg:iterative-optimal-regularization} in section~\ref{subsection:modelBasedOptimalRegularization}) converges to the optimal $\lambda$ for sufficiently small values of the noise variance. 
To our knowledge, this result has not previously been reported in the literature.

Next, in~\ref{subsection:sampleBasedParameterEstimation}, we describe an approach for sample-based estimation of the label-generative parameters that are
needed for the optimal regularization computation, requiring only one additional preliminary ridge regression
in the underparameterized case and two in the overparameterized case.
This yields our proposed sample-based optimal regularization approach (Algorithm~\ref{alg:sample-based-optimal-regularization}).
We report on our experimental evaluation in section~\ref{section:experimentalIllustrations} using controllable
synthetic spiked and lump covariance data. The results suggest
that Alg.~\ref{alg:sample-based-optimal-regularization} yields near-optimal random-$X$ generalization across a wide range of aspect ratios $d/N$ and noise levels $\epsilon$, 
considerably better than a data-agnostic default regularization strength
and better than a sample-based estimate of the signal-to-noise asymptotically optimal regularization strength of~\cite{DobribanWagnerHighDAsymptotics2018} at higher aspect ratios and noise levels.

\section{Isotropic noise model; bias-variance}
\label{section:biasVariance}

\subsection{Data-generative model}
\label{subsection:dataGenerativeModel}
We will assume the data generation mechanisms of Eq.~\ref{eq:dataGenerativeModel},
for a covariance matrix $\Sigma \in \mathbb R^{d \times d}$, parameter vector $\theta \in \mathbb R^{d \times 1}$, and noise amplitude $\epsilon \in \mathbb R^+$. The rows of $X \in \mathbb R^{N \times d}$ and $y \in \mathbb R^{N \times 1}$ are sampled (i.i.d.) from Eq.~\ref{eq:dataGenerativeModel} and mean-centered.
See Algorithm~\ref{alg:data-generation-model}.
\begin{subequations}
\begin{align}
    x &\sim p(x), &\text{ with } &E(p) = 0,\ &\cov p = \Sigma
        \label{eq:a}
\\
    y &= x^T \theta + \epsilon z, 
    &\text{ where } &E(z) = 0,\ &\var z = 1
        \label{eq:b}
\end{align}
\label{eq:dataGenerativeModel}
\end{subequations}
\noindent
The model of Eq.~\ref{eq:dataGenerativeModel} appears often in the research literature. 
Two distinct settings of this data generation model can be considered when evaluating generalization~\cite{rosset2020fixedx}: fixed-$X$, in which $y$ is subject to additive noise as in Eq.~\ref{eq:b} while $X$ remains fixed, and random-$X$, in which both $X$ and $y$ are sampled randomly as in Eq.~\ref{eq:a} and Eq.~\ref{eq:b}, respectively.

\begin{algorithm}
\begin{algorithmic}[1]
\Function{GenXData}{$n, d, \Sigma$}
\State $X \in \mathbb R^{n \times d} \overset{\text{i.i.d.}}{\sim} p(x)$ as in Eq.~\ref{eq:a}
\State \Return{$(I - \frac{1}{n}\mathbb{1 1}^T) X$\quad (mean-center columns)}
\EndFunction
\State
\Function{GenYData}{$X, \theta, \epsilon$}
\State $y = X \theta + \epsilon z$ as in Eq.~\ref{eq:b} ($z$ i.i.d.\ across rows)
\State \Return{$(I - \frac{1}{n} \mathbb{1 1}^T) y$\quad where $n = $ height of $X, y$}
\EndFunction
\end{algorithmic}
\caption{Data generation model.}
\label{alg:data-generation-model}
\end{algorithm}

\subsection{Bias-variance decomposition of prediction error}
\label{subsection:biasVarianceDecomposition}

We use Eq.~\ref{eq:ridgeParameterExpression} to compute an estimate $\hat{\theta}$ of the generative parameter $\theta$ for a data sample $(X, y)$ obtained via Algorithm~\ref{alg:data-generation-model}, and consider the variation of the predicted labels vector $\hat{y} = X\hat{\theta}$ due to the noise term $\epsilon z$ only, keeping $X$ fixed while accounting for the covariance structure in Eq.~\ref{eq:a} (fixed-$X$ setting). The result is standard and we borrow from~\cite{AvatiStanfordNotes2019,hastie_09_elements-of-statistical-learning}.

Let $X = USV^T$ be the singular value decomposition (SVD) of the unlabeled data matrix, $X$, where $U \in \mathbb R^{N \times N}$ and $V \in \mathbb R^{d \times d}$ are (self-) orthogonal: $U^T U = I_N$, $V^T V = I_d$, and $S \in \mathbb R^{N \times d}$ is diag($\sigma_1, \cdots, \sigma_{\min\!{(d,N)}}$)~\cite{golub2013matrix}.
We assume $S$ is nonsingular (i.e., all $\sigma_j > 0$).
We denote by $v_j$ the $j$th right singular vector of $X$, that is, the $j$th column of $V$.

By orthogonality of $U$, $V$, Eq.~\ref{eq:ridgeParameterExpression} takes the form of Eq.~\ref{eq:ridgeParameterSVD}.
\begin{align}
    \hat{\theta} &= 
V \text{diag}
\left (
\frac{\sigma_j}{\sigma^2_j + \lambda}
\right )
U^T y \nonumber\\
&=
V \text{diag}
\left (
\frac{\sigma^2_j}{\sigma^2_j + \lambda}
\right )
V^T \theta
+
\epsilon
V \text{diag}
\left (
\frac{\sigma_j}{\sigma^2_j + \lambda}
\right )
U^T z
\label{eq:ridgeParameterSVD}
\end{align}
Eq.~\ref{eq:ridgeParameterSVD} and the SVD $X = USV^T$ yield the expression for the vector of predicted labels $\hat{y} = X\hat{\theta}$ in Eq.~\ref{eq:ridgePredictionSVD}.
\begin{align}
\hat{y} &= 
U \text{diag}
\left (
\frac{\sigma^3_j}{\sigma^2_j + \lambda}
\right )
V^T \theta
+
\epsilon
U \text{diag}
\left (
\frac{\sigma^2_j}{\sigma^2_j + \lambda}
\right )
U^T z
\label{eq:ridgePredictionSVD}    
\end{align}
The predictive bias and variance~\cite{hastie_09_elements-of-statistical-learning} of $\hat{y}$ follow easily from Eq.~\ref{eq:ridgePredictionSVD},
as shown in Eqs.~\ref{eq:ridgePredictionBias}-\ref{eq:ridgePredictionVar}, where $\mindN$ denotes $\min\!{(d,N)}$;
see appendix~\ref{subsection:biasVarianceSupplement}.
Note that the predictive variance in Eq.~\ref{eq:ridgePredictionVar} is the
usual (statistical) variance of the difference $\hat{y} - E_z \hat{y}$.
\begin{align}
\text{bias}^2(\hat{y}) 
= (E_z \hat{y} - y)^2
&= 
\sum_{j=1}^{\mindN} 
\left ( 
\frac{\lambda \sigma_j v^T_j \theta}{\sigma^2_j + \lambda}
\right )^{\!2}
\label{eq:ridgePredictionBias}
\\
\var(\hat{y})
=
E_z [(\hat{y} - E_z \hat{y})^2]
&=
\epsilon^2
\sum_{j=1}^{\mindN}
\left (
\frac{\sigma^2_j}{\sigma^2_j + \lambda}
\right )^{\!2}
\label{eq:ridgePredictionVar}
\end{align}
The bias-variance decomposition~\cite{hastie_09_elements-of-statistical-learning} expresses the expected out-of-sample mean squared error as the sum of bias, variance, and noise as in
Eq.~\ref{eq:mseFromBiasVarianceNoise}. 
\begin{align}
\label{eq:mseFromBiasVarianceNoise}
E \left (
\frac{1}{N} \sum_{i=1}^N 
(\hat{y}^{(i)} - \ysup{i})^2
\right )
=
\frac{1}{N}
\left ( 
\text{bias}^2(\hat{y}) + \var(\hat{y}) \right )
+ \epsilon^2
\end{align}

\section{The optimal regularization strength}
\label{section:computingOptimalRegWeight}

In light of Eqs.~\ref{eq:ridgePredictionBias}-\ref{eq:mseFromBiasVarianceNoise}, adjusting regularization strength, $\lambda$, to minimize expected out-of-sample mean squared error requires first-order stationarity with respect to $\lambda$ of the sum of bias and variance. Direct differentiation yields Eq.~\ref{eq:stationarityOptimalLambda},
where $\mindN$ denotes $\min(d, N)$. Also see appendix~\ref{subsection:mse-stationarity-appendix}.
\begin{align}
\lambda \sum_{j=1}^{\mindN} 
\frac{\sigma_j^4\, (v^T_j\theta)^2}
{(\sigma_j^2 + \lambda)^3}
 = \epsilon^2\sum_{j=1}^{\mindN} \frac{\sigma_j^4}{(\sigma_j^2+\lambda)^3}
 \label{eq:stationarityOptimalLambda}
\end{align}
We assume $\theta \ne 0$ and $\sigma_j > 0$
(so at least one projection $v^T_j \theta \ne 0$) and isolate $\lambda$ on the left of Eq.~\ref{eq:stationarityOptimalLambda},
yielding 
Eq.~\ref{eq:lambdaUpdateOperator}.
{\large
\begin{align}
\lambda 
 = 
\epsilon^2 H(\lambda),
\quad
H(\lambda)
=
\frac
{\sum_{j=1}^{\mindN} \frac{\sigma_j^4}{(\sigma_j^2+\lambda)^3}}
 {\sum_{j=1}^{\mindN} 
\frac{\sigma_j^4\, (v^T_j\theta)^2}
{(\sigma_j^2 + \lambda)^3}}
 \label{eq:lambdaUpdateOperator}
\end{align}
}

We now show that the properties of $H(\lambda)$ in Eq.~\ref{eq:lambdaUpdateOperator} enable efficient computation of the optimal regularization strength. Theorem~\ref{thm:contractivenessOfLambdaOperator} constitutes the main observation.

\begin{theorem}
In Eq.~\ref{eq:lambdaUpdateOperator},
$\lambda \mapsto \epsilon^2 H(\lambda)$ is a contractive mapping on $\mathbb R^{\ge 0}$ when $\epsilon$ is sufficiently small: there exist $\epsilon_0 \in \mathbb R^+$ and $\mu < 1$ so that $\epsilon^2 |H(\lambda) - H(\lambda')| \le \mu |\lambda-\lambda'|$ for all $\lambda, \lambda' \ge 0$ if $\epsilon \le \epsilon_0$.
\label{thm:contractivenessOfLambdaOperator}
\end{theorem}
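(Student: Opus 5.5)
The plan is to show that $H$ is Lipschitz on $[0,\infty)$ with a finite constant $L$, i.e.\ $\sup_{\lambda\ge 0}|H'(\lambda)| = L<\infty$. Then the mean value theorem gives $\epsilon^2|H(\lambda)-H(\lambda')|\le \epsilon^2 L|\lambda-\lambda'|$, and any $\epsilon_0$ with $\epsilon_0^2 L<1$ works, with $\mu=\epsilon_0^2 L$. Write $c_j=(v_j^T\theta)^2\ge 0$ and $w_j(\lambda)=\sigma_j^4/(\sigma_j^2+\lambda)^3>0$. Then
\begin{align*}
H(\lambda)=\frac{\sum_j w_j(\lambda)}{\sum_j c_j w_j(\lambda)} .
\end{align*}
We assume, as stated before Eq.~\ref{eq:lambdaUpdateOperator}, that some $c_j>0$, so the denominator is positive and $H$ is smooth on $[0,\infty)$.

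First compute the derivative. Since $w_j'=-3w_j/(\sigma_j^2+\lambda)$, set $a_j(\lambda)=1/(\sigma_j^2+\lambda)$. With $W=\sum_j w_j$ and $C=\sum_j c_jw_j$, the quotient rule gives
\begin{align*}
H'(\lambda)=\frac{-3\sum_j a_jw_j\,C+3W\sum_j c_ja_jw_j}{C^2}
=\frac{3}{C^2}\sum_{j,k} w_jw_kc_k\,(a_k-a_j).
\end{align*}
Then $|a_k-a_j|\le |\sigma_j^2-\sigma_k^2|\,a_ja_k$, so
\begin{align*}
|H'(\lambda)|\le 3\,\frac{\Delta}{C^2}\sum_{j,k}w_jw_kc_ka_ja_k,
\qquad \Delta=\max_{j,k}|\sigma_j^2-\sigma_k^2| .
\end{align*}

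Next bound this uniformly in $\lambda$. This is the step I expect to be the main obstacle, because $\lambda$ ranges over an unbounded set. On any compact interval $[0,\Lambda]$ the bound is immediate by continuity, with $C$ bounded below. For large $\lambda$, fix an index $m$ with $c_m>0$, so that $C\ge c_m w_m$. All $w_j$ are comparable: $w_j/w_m\to\sigma_j^4/\sigma_m^4$ as $\lambda\to\infty$, and more generally
\begin{align*}
\frac{w_j}{w_m}\le \frac{\sigma_j^4}{\sigma_m^4}\max\Bigl(1,\frac{\sigma_m^2}{\sigma_j^2}\Bigr)^3
\end{align*}
for all $\lambda\ge 0$. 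Likewise $a_j\le a_m\max(1,\sigma_m^2/\sigma_j^2)$. Hence $\sum_{j,k} w_jw_kc_ka_ja_k\le K\,w_m^2a_m^2$ for a constant $K$ depending only on the $\sigma_j$ and $c_j$. This yields
\begin{align*}
|H'(\lambda)|\le \frac{3\Delta K}{c_m^2}\,a_m^2\le \frac{3\Delta K}{c_m^2\,\sigma_m^4}.
\end{align*}
The bound holds for all $\lambda\ge 0$ and in fact decays like $\lambda^{-2}$. So a single uniform bound covers everything, and the compact/large split may not even be needed. The care needed is only in the comparability constants when the $\sigma_j$ differ a lot.

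Finally, set $L$ equal to this bound and choose $\epsilon_0=(\mu/L)^{1/2}$ for any fixed $\mu<1$; if $L=0$, which happens for example when all $\sigma_j$ are equal, any $\epsilon_0$ works. The mean value theorem on the segment between $\lambda$ and $\lambda'$ in $[0,\infty)$ then gives the claimed contraction inequality. As a remark, $\epsilon^2H$ maps $[0,\infty)$ into itself because $H>0$, so Banach's fixed point theorem applies afterwards. That fact is not needed for the statement itself.
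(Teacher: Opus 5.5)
Your proof is correct, and its skeleton matches the paper's: show $\sup_{\lambda\ge 0}|H'(\lambda)|<\infty$, conclude that $H$ is Lipschitz, then choose $\epsilon_0$ small enough. The difference lies in how you bound the derivative.

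In your notation, the paper writes $H'=3(A-B)$ with $A=W\sum_j c_ja_jw_j/C^2$ and $B=\sum_j a_jw_j/C$. It then argues qualitatively: $A$ and $B$ are continuous, finite at $\lambda=0$, and tend to $0$ as $\lambda\to\infty$, so $H'$ is bounded.

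You instead symmetrize the quotient-rule numerator into $\sum_{j,k}w_jw_kc_k(a_k-a_j)$. You then use $a_k-a_j=(\sigma_j^2-\sigma_k^2)a_ja_k$ together with the comparability of $w_j, a_j$ with $w_m, a_m$. This gives a single explicit bound valid on all of $[0,\infty)$, with no split into a compact piece and a tail.

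Your route buys three things:
\begin{itemize}
\item An explicit Lipschitz constant, and hence an explicit $\epsilon_0$ in terms of the $\sigma_j$ and $(v_j^T\theta)^2$; the paper's argument gives existence only.
\item It exposes the cancellation between $A$ and $B$: each decays only like $\lambda^{-1}$, while $H'$ decays like $\lambda^{-2}$.
\item It shows immediately that $H'\equiv 0$ when all singular values coincide.
\end{itemize}

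The paper's version is shorter and is enough for the existence statement as posed. As a minor point, your inequality $|a_k-a_j|\le|\sigma_j^2-\sigma_k^2|\,a_ja_k$ actually holds with equality.
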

\begin{proof}
    It suffices to show $H$ is Lipschitz-continuous on $\mathbb R^{\ge 0}$.
    Calculation shows that $\frac{dH(\lambda)}{d\lambda} = 3(A(\lambda) - B(\lambda))$, with
    \begin{align}
    A(\lambda) &=
    \frac{\sum_{j=1}^{\mindN}\frac{\sigma_j^4}{(\sigma_j^2 + \lambda)^3}\ 
    \sum_{j=1}^{\mindN} \frac{\sigma_j^4}{(\sigma_j^2 + \lambda)^4} 
    (v^T_j \theta)^2}
    { \left (
    \sum_{j=1}^{\mindN}\frac{\sigma_j^4}{(\sigma_j^2 + \lambda)^3} (v^T_j \theta)^2
    \right )^2} \nonumber \\ 
    B(\lambda) &= 
    \frac{\sum_{j=1}^{\mindN}\frac{\sigma_j^4}{(\sigma_j^2 + \lambda)^4}}
    {\sum_{j=1}^{\mindN} \frac{\sigma_j^4}{(\sigma_j^2 + \lambda)^3} 
    (v^T_j \theta)^2}
    \label{eq:AandBtermsInHPrime}
    \end{align}
    The expressions $A(\lambda)$ and $B(\lambda)$ in Eq.~\ref{eq:AandBtermsInHPrime} approach finite limits as $\lambda \rightarrow 0$,
    since all $\sigma^2_j$ are positive and at least one projection $v^T_j \theta \ne 0$. $A(\lambda)$, $B(\lambda)$ approach $0$ as $\lambda \rightarrow \infty$, as seen by writing each term $\sigma^2_j + \lambda$ as $\lambda(\frac{\sigma^2_j}{\lambda} + 1)$ and factoring out the leading $\lambda$ in numerators and denominators. Hence, $H'(\lambda)$ is continuous and bounded on $[0, \infty)$, and therefore $H$ has a finite Lipschitz constant. This completes the proof. 
\end{proof}

\subsection{Model-based optimal ridge penalty computation}
\label{subsection:modelBasedOptimalRegularization}

\begin{corollary}
There is some $\epsilon_0 > 0$ such that Alg.~\ref{alg:iterative-optimal-regularization} converges to the optimal regularization weight $\lambda^*$ (as in Eq.~\ref{eq:stationarityOptimalLambda}), for all $\epsilon \le \epsilon_0$.
\begin{proof}
Corollary~\ref{corollary:lambdaFixedPointConvergence} is a straightforward consequence of Theorem~\ref{thm:contractivenessOfLambdaOperator} and the Banach fixed point theorem. 
\end{proof}
\label{corollary:lambdaFixedPointConvergence}
\end{corollary}

We write $H(\lambda \mid S, V, \theta)$ for $H(\lambda)$ to note the additional objects needed to evaluate the fixed-point operator in Eq.~\ref{eq:lambdaUpdateOperator}.
Given that each evaluation of $H$ takes time $O(d^2)$ for computation of the inner products $v^T_j \theta$ (they can be computed before the main loop), and the geometric convergence rate of fixed-point iteration,
Algorithm~\ref{alg:iterative-optimal-regularization} runs in time $O(d^2 |\log \delta|)$.
\begin{algorithm}
\begin{algorithmic}[1]
\Function{ModelOptReg}{$S, V, \theta, \epsilon, \lambda_0, \delta$}
\State $\lambda = \lambda_0, \quad \lambda_p = \lambda_0 + 2\delta$
\While{$|\lambda - \lambda_p| > \delta$}
\State $\lambda_p = \lambda$
\State $\lambda = \epsilon^2 H(\lambda \mid S, V, \theta)$ \ as in Eq.~\ref{eq:lambdaUpdateOperator}
\EndWhile
\State \Return{$\lambda$, mse$(\lambda)$ using Eq.~\ref{eq:mseFromBiasVarianceNoise}}
\EndFunction
\end{algorithmic}
\caption{Model-based optimal ridge regularization.}
\label{alg:iterative-optimal-regularization}
\end{algorithm}

\subsection{Sample-based estimation of hidden parameters}
\label{subsection:sampleBasedParameterEstimation}
Algorithm~\ref{alg:iterative-optimal-regularization} uses the 
singular values $S$ and right singular vectors $V$,
both of which are extracted easily from the data sample, $X$.
In contrast, the generative parameter $\theta$ 
and noise amplitude $\epsilon$ (Eq.~\ref{eq:dataGenerativeModel}), which are also used,
cannot be determined directly from the data. We therefore use a
ridge estimate $\hat \theta$ (Eq.~\ref{eq:ermRidgeRegressionMatrixForm}) for $\theta$ and the estimate $\hat \epsilon$
as described below. We show experimentally in section~\ref{section:experimentalIllustrations} that gains persist despite these substitutions, even in the random $X$ setting.

\subsubsection{Underparameterized case}

We derive a sample-based estimate $\hat \epsilon$ by noting that the predicted labels vector
$\hat{y} = X \hat{\theta}$ 
includes the orthogonal projection of the isotropic noise variable $\epsilon z$ of Eq.~\ref{eq:b} onto the column space of $X$, as in the rightmost term in Eq.~\ref{eq:ridgePredictionSVD}. 
With no regularization (or ignoring its effects), this projection accounts for a fraction $\mindN/N$ of the noise variance, $\epsilon^2$;
since bias is absent in this unregularized scenario, the mean residual $\Vert \hat{y} - y \Vert^2$ is the
remaining noise variance, $(1 - \mindN/N)\epsilon^2$. If $d < N$, this yields the noise variance estimate in Eq.~\ref{eq:epsilonNoRegEstimate},
where the sample variance has been used.
\begin{align}
\hat{\epsilon}_0 &= \sqrt{ 
\frac{
\frac{1}{N-1} 
\sum_{i=1}^N \left ( \hat{y}^{(i)} - \ysup{i} \right )^2
}
{1 - \frac{d}{N}}} 
\label{eq:epsilonNoRegEstimate}
\end{align}

With regularization, Eq.~\ref{eq:ridgePredictionVar} suggests that the variance of the projection of the noise variance
onto the column space of $X$ will be reduced. However, bias will also enter as per Eq.~\ref{eq:ridgePredictionBias},
making the sample-based estimation of the noise variance less straightforward than in the unregularized scenario.
We have found that an acceptable approximation can be derived by defining a {\em regularized rank}, ${r_p}$, 
for an exponent $p \ge 0$, as in Eq.~\ref{eq:regRankExpression};
the value $p=0$ yields the usual unregularized rank: $r_0 = d$.
The regularized rank ${r_p}$ is the effective dimension of the projection onto the column space, in that a fraction $r_p/N$ of the total noise variance, $\epsilon^2$ is captured.
\begin{align}
{r_p}
&=
\sum_{j=1}^{\mindN} \left ( \frac{\sigma^2_j}{\sigma^2_j + \lambda} \right )^p
\label{eq:regRankExpression}
\end{align}

The remaining fraction $1 - {r_p}/N$ of the noise variance
enters into the residual, as in the unregularized scenario.
If we ignore the bias contribution to the residual,
we obtain the noise variance estimate
in Eq.~\ref{eq:epsilonRegEstimate}, which is well-defined whenever $\lambda > 0$ (since $r_p < N$ in that case). Experimentally, the exponent $p = \frac{1}{4}$ produces good
estimates $\hat{\epsilon}_p$ for moderate true generative noise levels $\epsilon$
(see Tables~\ref{table:med-eps-CI-random-X},~\ref{table:genParameterEstimatesCITableAppendix}).
\begin{align}
\hat{\epsilon}_p &= \sqrt{ 
\frac{
\frac{1}{N-1} 
\sum_{i=1}^N \left ( \hat{y}^{(i)} - \ysup{i} \right )^2
}
{1 - \frac{{r_p}}{N}}} 
\label{eq:epsilonRegEstimate}
\end{align}

\subsubsection{Overparameterized case}

If $d \ge N$, the unregularized model fits the training data perfectly, precluding the use of
the above approach to extract the noise from the in-sample error.
Instead, ignoring the bias term in Eq.~\ref{eq:mseFromBiasVarianceNoise} as in the underparameterized case,
we see using Eq.~\ref{eq:ridgePredictionVar} that the out-of-sample error is $\epsilon^2\, \frac{r_p}{N} + \epsilon^2$. We compute a simple 2-fold cross-validation estimate $\estoosmse{X}{y}$ of the out-of-sample error and obtain the 
estimate in Eq.~\ref{eq:epsilonRegEstimateOverparameterized}; $k > 2$ folds would reduce variability
at higher computational cost.
\begin{align}
\hat{\epsilon}_p &= \sqrt{ 
\frac{
\frac{N}{N-1} 
\estoosmse{X}{y}
}
{1 + \frac{{r_p}}{N}}} 
\label{eq:epsilonRegEstimateOverparameterized}
\end{align}

\subsection{Sample-based optimal regularization procedure}
\label{subsection:sampleBasedOptimalRegularization}
Algorithm~\ref{alg:sample-based-optimal-regularization} describes our proposed approach for sample-based optimal ridge regularization by combining an SVD-based initial ridge regression estimate $\hat{\theta}$ with the $\hat{\epsilon}$ estimation technique of
Eqs.~\ref{eq:regRankExpression}-\ref{eq:epsilonRegEstimate} and the model-based optimal ridge computation of Algorithm~\ref{alg:iterative-optimal-regularization}.
Ridge parameter estimation follows Eq.~\ref{eq:ridgeParameterExpression}, using the SVD of $X$; $(S + \lambda_0 I)^{\dagger}$ denotes the diagonal $d \times N$ matrix of the reciprocals $1/(\sigma_j + \lambda_0)$ of the shifted singular values.
Runtime is dominated by the $O(N d^2)$ SVD runtime~\cite{golub2013matrix},
assuming fixed-precision arithmetic; two (half-time) SVD
computations on size $(N/2, d)$ samples occur on line~\ref{algStep:epsilonEstimation} in the overparameterized case.
Other parameter estimation methods can be substituted in steps~\ref{algStep:thetaEstimation},~\ref{algStep:epsilonEstimation}.

\begin{algorithm}
\begin{algorithmic}[1]
\Function{SampleOptReg}{$X, y, \lambda_0, p, \delta$}
\State $U, S, V = \Call{SVD}{X}$
\label{algStep:initialSVD}
\State $\hat{\theta} = V (S + \lambda_0 I)^{\dagger} U^T y$
\label{algStep:thetaEstimation}
\State $\hat{\epsilon} = \Call{EpsilonEstimate}{S, \hat{\theta}, p}$ \quad (\text{Eq.}~\ref{eq:epsilonRegEstimate},~\ref{eq:epsilonRegEstimateOverparameterized})
\label{algStep:epsilonEstimation}
\State \Return{\Call{ModelOptReg}{$S, V, \hat{\theta}, \hat{\epsilon}, \lambda_0, \delta}$}
\EndFunction
\end{algorithmic}
\caption{Sample-based optimal ridge regularization.}
\label{alg:sample-based-optimal-regularization}
\end{algorithm}

\section{Experimental evaluation}
\label{section:experimentalIllustrations}

Below, we first describe in section~\ref{subsection:modelBasedFixedPointOptimality} experimental validation of the analysis of section~\ref{section:computingOptimalRegWeight}, which shows that model-based Algorithm~\ref{alg:iterative-optimal-regularization}, based on the true generative parameters $\theta, \epsilon$, yields optimal expected out-of-sample performance in the fixed-$X$ setting. We then move on to our experimental assessment of 
the sample-based approach of Algorithm~\ref{alg:sample-based-optimal-regularization}, starting with a description
of experimental protocol in section~\ref{subsection:experimental-protocol-sample-based}, followed by a discussion
of the results in the fixed-$X$ and random-$X$ settings (sections~\ref{section:fixed-X-results} and~\ref{section:random-X-results}, respectively).

\subsection{Optimality of model-based fixed-point Algorithm~\ref{alg:iterative-optimal-regularization}}
\label{subsection:modelBasedFixedPointOptimality}

As a preliminary step, we verified that the regularization strength returned by the fixed-point
approach in Algorithm~\ref{alg:iterative-optimal-regularization} accurately reflects the stationarity condition 
resulting from the analytical description in Eqs.~\ref{eq:ridgePredictionBias}-\ref{eq:stationarityOptimalLambda} (assuming known generative parameters in Eq.~\ref{eq:dataGenerativeModel}, 
in particular $\theta$ and $\epsilon$).

\subsubsection{Algorithm~\ref{alg:iterative-optimal-regularization} correctly identifies MSE critical points}
Consistent with Theorem~\ref{thm:contractivenessOfLambdaOperator}, Algorithm~\ref{alg:iterative-optimal-regularization} reliably returns $\lambda$ values that coincide precisely with critical points as in Eq.~\ref{eq:stationarityOptimalLambda} of the expected out-of-sample mean squared error (MSE) of Eq.~\ref{eq:mseFromBiasVarianceNoise}. 
Fig.~\ref{fig:analyticalCorrectnessFixpointExample}
shows a typical result that also illustrates the fact that the optimal value of the regularization penalty 
depends on the alignment between the generative parameter vector $\theta$ and the eigenvectors $v_j$ of the
covariance matrix $\Sigma$ (Eq.~\ref{eq:stationarityOptimalLambda}).

\begin{figure}[h!]
    \centering
    \includegraphics[width=\figurewidth, trim = 0 4 4 2, clip=True]{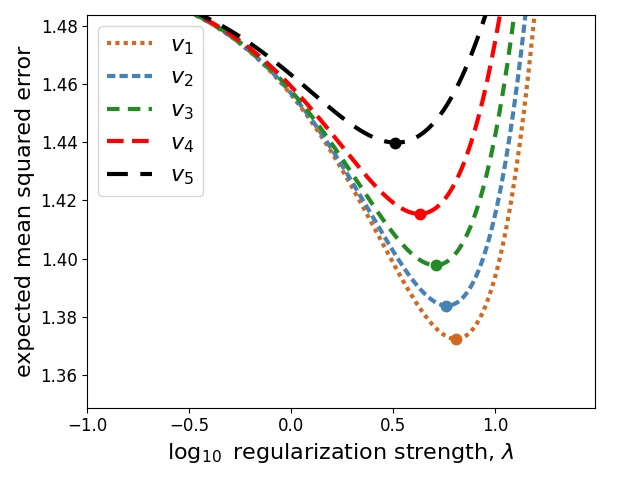}\\
    \caption{Optimality of ridge penalty $\lambda$ returned by Algorithm~\ref{alg:iterative-optimal-regularization}.
    Sample size $(N, d) = (10, 5)$. Generative covariance matrix $\Sigma = \text{diag}(5,4,3,2,1)$. 
    Noise variance $\epsilon^2=1$. $j$th curve shows exact expected out-of-sample mean squared error
    (Eq.~\ref{eq:mseFromBiasVarianceNoise}) for $j$th principal vector of $\Sigma$ as generative parameter 
    vector $\theta$ (Eq.~\ref{eq:dataGenerativeModel}); 
    each highlighted point uses the ridge penalty returned by Alg.~\ref{alg:iterative-optimal-regularization}.
    Relative differences of $\lambda$ values obtained by numerical minimization of 
    Eq.~\ref{eq:mseFromBiasVarianceNoise}
    and those returned by Alg.~\ref{alg:iterative-optimal-regularization} are less than $10^{-4}$ for $N \le 500$
    (Table~\ref{table:maxRelDiffModelBased}).}
    \label{fig:analyticalCorrectnessFixpointExample}
\end{figure}

\subsubsection{Non-global local optima exist but have minimal impact}
The first-order optimality condition of Eq.~\ref{eq:stationarityOptimalLambda} occasionally has more than one solution (e.g.,~Fig.~\ref{fig:doubleDescentExample}). 
We assessed the prevalence and potential impact of non-global local optima by comparing the values $\lambda_{\text{fp}}$ returned by Alg.~\ref{alg:iterative-optimal-regularization} with the globally MSE-minimizing $\lambda_{\text{min}}$ found by a search (Eq.~\ref{eq:lambdaStar}), for sample sizes $N = 20, 100, 500$ and $10$ aspect ratios $d/N$ over an evenly spaced range between $1/5$ and $2$ for each $N$.
\begin{align}
\lambda_{\text{min}} = \arg\min_{-1 < \lambda < 10^6} \text{mse}(\lambda)\ \text{from Eq.~\ref{eq:mseFromBiasVarianceNoise}}
\label{eq:lambdaStar}
\end{align}
Results were aggregated by taking medians over $10$ random data samples $X$ and $100$ generative vectors $\theta$ for each $(N,d)$; thus, a total of $1000$ values entered into each aggregate median for a given $d$. We gauged the degree of discrepancy by the {\em maximum} relative absolute difference 
of the $10$ pairs of median values (Table~\ref{table:maxRelDiffModelBased}).

While $\lambda$ relative differences can approach $1$ in isolated cases, MSE relative differences do not surpass $10^{-4}$, even if maxima are taken across all $10 \times 1000$ measurements for a given $N$ (values not shown).
The evidence suggests that the impact of non-global local minima on optimality of the regularization strengths returned by Alg.~\ref{alg:iterative-optimal-regularization} 
is insignificant.

\begin{table}
\caption{Maximum value of MSE relative difference $|\text{MSE}(\lambda_{\text{fp}}) - \text{MSE}(\lambda_{\text{min}})|/\text{MSE}(\lambda_{\text{min}})$ between $\lambda_{\text{fp}}$ from Alg.~\ref{alg:iterative-optimal-regularization} and globally-minimizing $\lambda_{\text{min}}$. Values are maxima over $10$ median pairs,
each an aggregate over $10$ random $X$ and $100$ generative parameters $\theta$ for a given $(N,d)$ (see text).}
\vspace{-.2cm}
\begin{center}
\renewcommand{\arraystretch}{1.2}
\begin{tabular}{cccc}
\hline
{\normalsize $N$} &{\normalsize $20$} &{\normalsize $100$} &{\normalsize $500$}\\
\hline
MSE &$8.6\, 10^{-12}$ &$1.3\, 10^{-11}$ &$1.3\, 10^{-10}$\\
$\lambda$ &$4.2\, 10^{-6}$ &$9.7\, 10^{-6}$ &$3.7\, 10^{-5}$\\
\hline
\end{tabular}
\end{center}
\label{table:maxRelDiffModelBased}
\end{table}

\begin{figure}[h!]
    \centering
    \includegraphics[width=\figurewidth, trim = 0 4 4 2, clip=True]{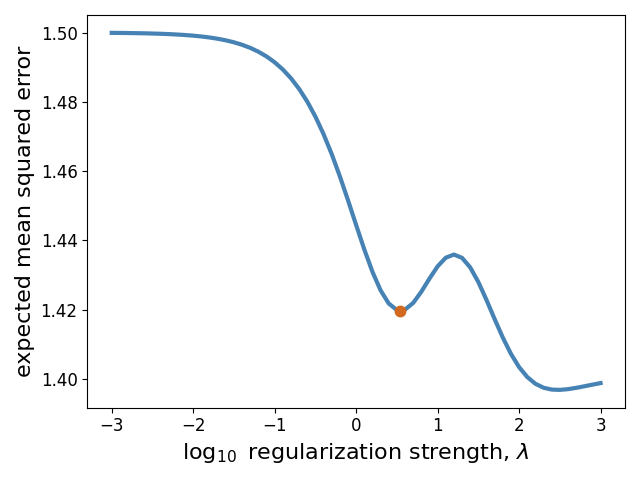}\\
    \caption{Example in which the analytical fixed-X out-of-sample expected MSE of Eq.~\ref{eq:mseFromBiasVarianceNoise} has multiple critical points (Eq.~\ref{eq:stationarityOptimalLambda}). In the example shown, Alg.~\ref{alg:iterative-optimal-regularization} returns the highlighted non-global optimum. 
    Although regularization strengths returned by Alg.~\ref{alg:iterative-optimal-regularization} occasionally 
    differ noticeably from the global minimizer, the impact on MSE appears to be insignificant
    in an aggregate sense (Table~\ref{table:maxRelDiffModelBased}).}
    \label{fig:doubleDescentExample}
\end{figure}

\subsection{Evaluation procedures in the sample-based case}
\label{subsection:experimental-protocol-sample-based}

We evaluated Alg.~\ref{alg:sample-based-optimal-regularization} in the
fixed-X and random-X settings of section~\ref{subsection:dataGenerativeModel}, with respective
evaluation protocols as in Algs.~\ref{alg:experimental-procedure-fixed-X} and~\ref{alg:experimental-procedure-random-X}; expected out-of-sample error rate is computed analytically in the fixed-X case, and estimated
over test data in the random-X case. 
Use of synthetic data allows us to control covariance profiles and label-generative parameters.

\subsubsection{Generative covariance profiles}
We considered two distinct spectral profiles for the generative covariance matrix $\Sigma$ of Eq.~\ref{eq:a},
with aspect ratios $0 < \frac{d}{N} < 2$: 
\paragraph*{\em Bulk type} eigenvalues $\sigma^2_j$
follow the arithmetic progression $1, \frac{d-1}{d}, \frac{d-2}{d}, \cdots, \frac{1}{d}$
\paragraph*{\em Spiked type} eigenvalues span three standard deviations of a Gaussian PDF. 
Eigenvalues were normalized so that the largest is $1$.

\subsubsection{Generative linear parameters}
We considered $m_{\theta}=100$ generative unit vectors $\theta$
(Eq.~\ref{eq:b}), by taking the columns of
$\lfloor \frac{100}{d} \rfloor$ orthogonal $d \times d$ matrices sampled uniformly at random, plus the 
first $100 - d \lfloor \frac{100}{d} \rfloor$ columns of another uniformly random orthogonal matrix
if $d$ is not a multiple of $100$. The value $m_{\theta}=100$ was found to be sufficient to limit
variation between runs, while not requiring excessive computation time; see~\ref{subsection:bootstrapConvergenceAppendix}.

\subsubsection{Regularization approaches considered}
\label{subsection:regularization-approaches}
We compared the following techniques for computing the ridge regularization strength:
\paragraph*{\em Optimal regularization as determined by numerical optimization}
We used the value $\lambda_{\text{min}}$ returned by the \verb|minimize| function in \verb|scipy.optimize| for the mean squared prediction error over the test sample. This value may overfit the test sample, and occasionally yields test error rates below the $\epsilon^2$ noise variance lower bound in Eq.~\ref{eq:mseFromBiasVarianceNoise}; 
while unattainable out of sample, it nonetheless provides an aspirational ideal.

\paragraph*{\em Optimal model-based regularization using the true generative parameters $\theta, \epsilon$} 
We used the value $\lambda_{\text{fp}}$ returned by Algorithm~\ref{alg:iterative-optimal-regularization},
based on a fixed-point iteration solution of the first-order analytical stationarity condition of Eq.~\ref{eq:stationarityOptimalLambda}.
\paragraph*{\em Sample-based regularization using our proposed technique, Algorithm~\ref{alg:sample-based-optimal-regularization}}
We used the value $\lambda_{\text{sfp}}$ returned by Algorithm~\ref{alg:sample-based-optimal-regularization}. This approach is based on estimation of the generative noise standard deviation $\epsilon$ using Eqs.~\ref{eq:epsilonRegEstimate},~\ref{eq:epsilonRegEstimateOverparameterized}
and of the label-generative parameter $\theta$ using a preliminary ridge estimate.
\paragraph*{\em A sample-based version of the signal-to-noise approach of~\cite{DobribanWagnerHighDAsymptotics2018}} 
We used the value $\lambda_{\text{SN}} = d\, \hat{\epsilon}^2 / \Vert \hat{\theta} \Vert^2$ from section 2 of~\cite{DobribanWagnerHighDAsymptotics2018}\footnote{Note 
the factor $\lambda N$ in~\cite{DobribanWagnerHighDAsymptotics2018}, p.254 vs.\ only $\lambda$
in Eq.~\ref{eq:ridgeParameterExpression} (present paper).}, using $\Vert \hat{\theta}\Vert^2$ as a single-point estimate of $\tr \cov(\theta)$, and the same estimates $\hat{\epsilon}, \hat{\theta}$ as in Algorithm~\ref{alg:sample-based-optimal-regularization}.
\paragraph*{\em Default regularization $\lambda_0 = 1$ (e.g.,~\cite{scikit-learn})} 
We confirmed the use of the unscaled value $1$ in the \verb|scikit-learn| \verb|Ridge| class by direct comparison of the \verb|Ridge| model coefficients with the expression of Eq.~\ref{eq:ridgeParameterExpression} over sample data.

\subsubsection{Statistical measures of variation}
For each regularization approach tested, we computed BCa-bootstrapped~\cite{EfronBCaBootstrap1987} $95\%$ confidence intervals for the median regularization strength $\lambda$ produced by that approach, as well as for the mean squared error resulting from that value of $\lambda$, based on $10,\!000$ bootstrap samples.

\subsubsection{Software and hardware platforms}
The evaluation procedures of Algorithms~\ref{alg:experimental-procedure-fixed-X},~\ref{alg:experimental-procedure-random-X} were implemented in the Python programming language,
using the \verb|scikit-learn|~\cite{scikit-learn}, \verb|numpy|~\cite{numpy}, and \verb|scipy|~\cite{scipy} libraries. Visualization used \verb|matplotlib|~\cite{matplotlib}.
Hardware was a Dell Latitude 7450 laptop computer with Intel\textsuperscript{\textregistered} Core\textsuperscript{\texttrademark} Ultra 7 165U processor at $\SI{2.1}{\giga\hertz}$ and $\SI{32}{\giga\byte}$ of RAM.

\subsection{Results in the fixed-$X$ setting}
\label{section:fixed-X-results}

Evaluation in the fixed-X setting followed Algorithm~\ref{alg:experimental-procedure-fixed-X},
using $m_{\theta}=100$ randomly sampled parameters $\theta$, each of which was paired with $m_X=50$ random
$X$ samples, with $m_y=50$ random $y$ samples for each pair $(\theta, X)$.

Bootstrapped confidence intervals for median out-of-sample MSE appear in Table~\ref{table:med-MSE-CI-fixed-X}.
Generalization performance stratifies into three tiers:
model-based Algorithm~\ref{alg:iterative-optimal-regularization} and ideal model-based MSE-minimizing regularization (c.f., the discussion on global vs.\ local optimality in section~\ref{subsection:modelBasedFixedPointOptimality})
tie for best; the sample-based approach of Algorithm~\ref{alg:sample-based-optimal-regularization} and
the signal-to-noise approach based on~\cite{DobribanWagnerHighDAsymptotics2018} have intermediate performance; and default $\lambda = 1$ regularization performs worst. The signal-to-noise approach slightly beats Algorithm~\ref{alg:sample-based-optimal-regularization} at the smallest sample size; the order reverses
at higher sample sizes, with Algorithm~\ref{alg:sample-based-optimal-regularization}'s advantage increasing
with sample size. 

\begin{algorithm}[h!]
\begin{algorithmic}[1]
\Function{EvaluateFixedX}{$N, d, \Sigma, \epsilon, m_{\theta}, m_X, m_y$}
\small
\For{$i_{\theta} = 1, \cdots, m_{\theta}$}
\State $\theta \sim \text{Unif}(\{ v \in \mathbb R^{d \times 1} \mid v^T v = 1 \})$
\For{$i_X = 1, \cdots, m_X$}
\State $X = \Call{GenXData}{N, d, \Sigma}$
\State $U, S, V = \Call{SVD}{X}$
\State $\lambda_{\text{min}}[i_X, i_{\theta}] =$ from Eq.~\ref{eq:lambdaStar}
\State $\text{mse}^*[i_X, i_{\theta}] =$ from Eq.~\ref{eq:mseFromBiasVarianceNoise} with $\lambda_{\text{min}}[i_X, i_{\theta}]$
\State $(\lambda_{\text{fp}}, \text{mse}_{\text{fp}})[i_X, i_{\theta}] = \Call{ModelOptReg}{S, V, \theta, \epsilon}$ 
\label{algline:model-based-call}
\For{$i_y = 1, \cdots, m_y$}
\State $y = \Call{GenYData}{X, \theta, \epsilon}$
\State $\lambda_{\text{sfp}}[i_X, i_{\theta}, i_y] = \Call{SampleOptReg}{X, y}$ 
\label{algline:sample-based-call}
\State $\text{mse}_{\text{sfp}}[i_X, i_{\theta}, i_y] =$ from Eq.~\ref{eq:mseFromBiasVarianceNoise} w.\ $\lambda[i_X, i_{\theta}, i_y]$
\State $\lambda_{\text{SN}}[i_X, i_{\theta}, i_y], \text{mse}_{\text{SN}}[i_X, i_{\theta}, i_y] =$ see~\ref{subsection:regularization-approaches}
\EndFor	
\EndFor 
\EndFor	
\State \Return{BCa-bootstrapped $95\%$ CI for $\lambda$, $\text{mse}$ values}
\normalsize
\EndFunction
\end{algorithmic}
\caption{Experimental evaluation, fixed-X setting. \linebreak
Model-based and sample-based optimal regularization strengths
on lines~\ref{algline:model-based-call},~\ref{algline:sample-based-call}
are as in Algorithms~\ref{alg:iterative-optimal-regularization},~\ref{alg:sample-based-optimal-regularization},
respectively. Hyperparameter values not shown: $\lambda_0=1, \delta=10^{-4}, p=\frac{1}{4}$.
$X, y$ data are generated as in Algorithm~\ref{alg:data-generation-model}.}
\label{alg:experimental-procedure-fixed-X}
\end{algorithm}

\begin{table}[b]
\caption{Median out-of-sample MSE, fixed-X setting.\linebreak $95\%$ BCa bootstrap confidence intervals.
Aspect ratio $\frac{d}{N} = 0.9$. Spiked covariance, noise $\epsilon=1$. Values based on
$10^4$ bootstrap samples over $50 \times 50$ target vectors for each of $100$ generative $\theta$. 
$\lambda_{\text{SN}}$~\cite{DobribanWagnerHighDAsymptotics2018} is best of sample-based methods for $N=20$;
Alg.~\ref{alg:sample-based-optimal-regularization} wins by a growing margin as $N$ increases.}
\vspace{-0.2cm}
\begin{center}
\renewcommand{\arraystretch}{1.2}
\setlength{\tabcolsep}{3pt}
\begin{tabular}{cccc}
\hline
{\normalsize $N$} &{\normalsize $20$} &{\normalsize $100$} &{\normalsize $500$}\\
\hline
optimization-based $\lambda_{\text{min}}$ &$(1.14, 1.16)$ &$(1.15, 1.16)$ &$(1.15, 1.16)$\\
model-based $\lambda_\text{fp}$ (Alg.~\ref{alg:iterative-optimal-regularization}) &$(1.14, 1.16)$ &$(1.15, 1.16)$ &$(1.15, 1.15)$\\
sample-based $\lambda_{\text{sfp}}$ (Alg.~\ref{alg:sample-based-optimal-regularization}) &$(1.17, 1.17)$ &$\mathbf{(1.18, 1.18)}$ &$\mathbf{(1.19, 1.19)}$\\
signal-to-noise $\lambda_{\text{SN}}$~\cite{DobribanWagnerHighDAsymptotics2018} &$\mathbf{(1.16, 1.16)}$ &$(1.24,1.25)$ &$(1.37,1.37)$\\
default $\lambda_0 = 1$ (e.g.,~\cite{scikit-learn}) &$(1.34, 1.34)$ &$(1.49, 1.49)$ &$(1.61, 1.61)$\\
\hline
\end{tabular}
\end{center}
\label{table:med-MSE-CI-fixed-X}
\end{table}

Algorithm~\ref{alg:sample-based-optimal-regularization} attains the strong generalization performance observed in Table~\ref{table:med-MSE-CI-fixed-X} despite the use of regularization strengths $\lambda_{\text{sfp}}$ that differ considerably from the model-based $\lambda_{\text{min}}$, as shown in Table~\ref{table:med-alpha-CI-fixed-X}. 
The difference between the signal-to-noise-based $\lambda_{\text{SN}}$~\cite{DobribanWagnerHighDAsymptotics2018} and $\lambda_{\text{min}}$ at larger sample sizes is considerably greater, however,
consistent with the weaker larger-sample generalization performance observed in Table~\ref{table:med-MSE-CI-fixed-X} for $\lambda_{\text{SN}}$ as compared with $\lambda_{\text{sfp}}$.
We refrain from additional discussion of fixed-X evaluation, because the results are qualitatively very similar to
those in the random-X setting, discussed next.

\begin{table}
\caption{Median regularization strengths, fixed-X setting. $95\%$ BCa bootstrap confidence intervals.
Data aspect ratio $\frac{d}{N} = 0.9$. Spiked covariance model, noise $\epsilon=1$. Values based on
$10^4$ bootstrap samples over $50 \times 50$ target vectors for each of $100$ generative parameters. 
Although the sample-based regularization strength estimates 
$\lambda_{\text{sfp}}, \lambda_0$
are noticeably smaller than the optimal model-based $\lambda_{\text{min}}$, the $\lambda_{\text{sfp}}$ estimate (Algorithm~\ref{alg:sample-based-optimal-regularization}) yields near-optimal MSE 
values (see Table~\ref{table:med-MSE-CI-fixed-X}).}
\vspace{-0.2cm}
\begin{center}
\renewcommand{\arraystretch}{1.2}
\setlength{\tabcolsep}{4pt}
\begin{tabular}{cccc}
\hline
{\normalsize $N$} &{\normalsize $20$} &{\normalsize $100$} &{\normalsize $500$}\\
\hline
model-based $\lambda_{\text{min}}$ &$(17.0,20.0)$ &$(85.9,94.7)$ &$(448,465)$\\
model-based $\lambda_\text{fp}$ (Alg.~\ref{alg:iterative-optimal-regularization}) &$(17.0,19.9)$ &$(85.9,94.7)$ &$(448,465)$\\
sample-based $\lambda_{\text{sfp}}$ (Alg.~\ref{alg:sample-based-optimal-regularization}) &$(9.85,10.0)$ &$(35.8,36.0)$ &$(150,150)$\\
signal-to-noise $\lambda_{\text{SN}}$~\cite{DobribanWagnerHighDAsymptotics2018} &$(10.5, 10.6)$ &$(15.4,15.4)$ &$(19.8,19.9)$\\
default $\lambda_0 = 1$ (e.g.,~\cite{scikit-learn}) &$(1,1)$ &$(1,1)$ &$(1,1)$\\
\hline
\end{tabular}
\end{center}
\label{table:med-alpha-CI-fixed-X}
\end{table}

\subsection{Results in the random-$X$ setting}
\label{section:random-X-results}

Random-X evaluation followed Alg.~\ref{alg:experimental-procedure-random-X}.
We tested $m_{\theta} = 100$ random unit vectors as the label-generative parameter, $\theta$,
using $m_{X,y} = 100$ random $X, y$ samples per $\theta$. 
See Tables~\ref{table:med-MSE-CI-random-X},~\ref{table:med-alpha-CI-random-X},~\ref{table:med-eps-CI-random-X}
and Figs.~\ref{fig:msesN100dVarious},
~\ref{fig:msesN100d90EpsVariousRelative}.
We omit Alg.~\ref{alg:iterative-optimal-regularization} because its
performance is nearly indistinguishable from the globally optimal $\lambda_{\text{min}}$.

\begin{algorithm}
\begin{algorithmic}[1]
\Function{EvaluateRandX}{$N, d, \Sigma, \epsilon, N_{\text{test}}, m_{\theta}, m_{X,y}$}
\small
\For{$i_{\theta} = 1, \cdots, m_{\theta}$}
\State $\theta \sim \text{Unif}(\{ v \in \mathbb R^{d \times 1} \mid v^T v = 1 \})$
\State $X_{\text{test}} = \Call{GenXData}{N_{\text{test}}, d, \Sigma}$
\State $y_{\text{test}} = \Call{GenYData}{X_{\text{test}}, \theta, \epsilon}$
\For{$i_{X,y} = 1, \cdots, m_{X,y}$}
\State $X = \Call{GenXData}{N, d, \Sigma}$
\State $y = \Call{GenYData}{X, \theta, \epsilon}$
\State $U, S, V = \Call{SVD}{X}$
\State $\lambda_{\text{min}}[i_{X,y}, i_{\theta}] = \arg\min_{\lambda} 
\Call{mse}{X_{\text{test}}\hat{\theta}(\lambda), y_{\text{test}}}$, 
\State \ \ where $\hat{\theta}(\lambda) = (X^T X + \lambda I)^{\dagger} X^T y$ (also in~\ref{algline:secondUseThetaHatOfLambdaNotation})
\label{algline:thetaHatIndexNotation}
\State $\text{mse}^*[i_{X,y}, i_{\theta}] = \Call{mse}{X_{\text{test}}\hat{\theta}(\lambda_{\text{min}}[i_{X,y}, i_{\theta}]), y_{\text{test}}}$ 
\State $\lambda_{\text{sfp}}[i_{X,y}, i_{\theta}] = \Call{SampleOptReg}{X, y}$ 
\label{algline:sample-based-call-randX}
\State $\text{mse}_{\text{sfp}}[i_{X,y}, i_{\theta}] = \Call{mse}{X_{\text{test}}\hat{\theta}(\lambda_{\text{sfp}}[i_{X,y}, i_{\theta}]), y_{\text{test}}}$
\label{algline:secondUseThetaHatOfLambdaNotation}
\State $\lambda_{\text{SN}}[i_{X,y}, i_{\theta}], \text{mse}_{\text{SN}}[i_{X,y}, i_{\theta}] =$ see~\ref{subsection:regularization-approaches}
\EndFor 
\EndFor	
\State \Return{BCa-bootstrapped $95\%$ CI for $\lambda$, $\text{mse}$ values}
\normalsize
\EndFunction
\end{algorithmic}
\caption{Experimental evaluation, random-X setting.
Model-based and sample-based optimal regularization strengths
on lines~\ref{algline:model-based-call},~\ref{algline:sample-based-call}
are as in Algorithms~\ref{alg:iterative-optimal-regularization},~\ref{alg:sample-based-optimal-regularization},
respectively. Hyperparameter values not shown: $\lambda_0=1, \delta=10^{-4}, p=\frac{1}{4}$. 
$X, y$ data are generated as in Algorithm~\ref{alg:data-generation-model}.}
\label{alg:experimental-procedure-random-X}
\end{algorithm}

\begin{figure}[h!]
    \centering
    \includegraphics[width=\figurewidth, trim = 0 2 4 2, clip=True]{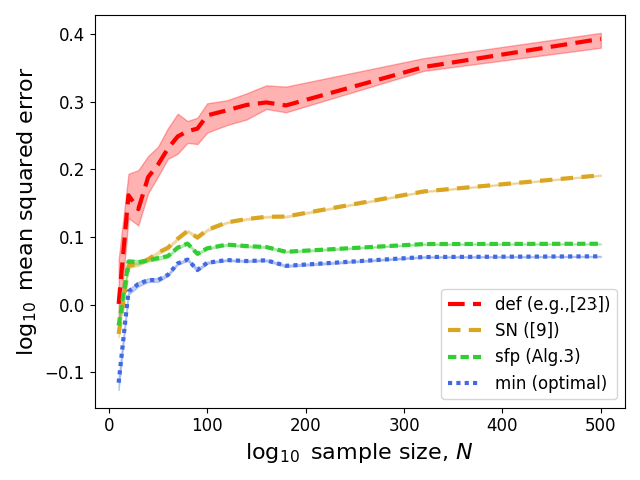}\\
    \caption{Median estimated out-of-sample MSE as a function of sample size, data aspect ratio $0.9$. 
    Spiked covariance model, $\epsilon=1$. 
    Shading shows BCa bootstrap $95\%$ confidence intervals over $100$ 
    repetitions of each of $100$ generative vectors $\theta$; visually perceptible differences are significant. 
    Signal-to-noise~\cite{DobribanWagnerHighDAsymptotics2018} is best at smaller sample sizes, $N < 40$.
    Alg.~\ref{alg:sample-based-optimal-regularization} is best for larger samples, $N > 40$,
    and its generalization benefit over signal-to-noise
    grows with $N$.
    Also see Table~\ref{table:med-MSE-CI-random-X}.}
    \label{fig:msesAspectRatio0.9NVarious}
\end{figure}

\begin{table}[t]
\caption{Median out-of-sample random-X MSE. Covariance as marked. Aspect ratio $\frac{d}{N} = 0.9$, noise $\epsilon=1$; best sample-based results in {\bf bold}.
$95\%$ BCa bootstrap confidence intervals based on $10^4$ bootstrap samples over $N_{\text{test}} = 1000$ 
test examples for ridge parameter estimates over $m_{X,y} = 100$ samples $(X,y)$ for each of $m_{\theta} = 100$ generative $\theta$. See Alg.~\ref{alg:experimental-procedure-random-X}.}
\vspace{-0.2cm}
\begin{center}
\renewcommand{\arraystretch}{1.2}
\setlength{\tabcolsep}{3pt}
\begin{tabular}{cccc}
\hline
{\normalsize $N$} &{\normalsize $20$} &{\normalsize $100$} &{\normalsize $500$}\\
\hline
&{\normalsize Spiked} \\
$\lambda_{\text{min}}\ (\text{Alg.}~\ref{alg:experimental-procedure-random-X})$ &$(1.01, 1.03)$ &$(1.15, 1.16)$ &$(1.17, 1.18)$\\
sample-based $\lambda_{\text{sfp}}$ (Alg.~\ref{alg:sample-based-optimal-regularization}) &$(1.13, 1.15)$ &$\mathbf{(1.21, 1.22)}$ &$\mathbf{(1.23, 1.23)}$\\
signal-to-noise $\lambda_{\text{SN}}$~\cite{DobribanWagnerHighDAsymptotics2018} &$\mathbf{(1.11, 1.13)}$ &$(1.29, 1.29)$ &$(1.55, 1.56)$\\
default $\lambda_0 = 1$ (e.g.,~\cite{scikit-learn}) &$(1.33, 1.53)$ &$(1.81, 1.95)$ &$(2.42, 2.51)$\\
\hline
\end{tabular}
\begin{tabular}{cccc}
&{\normalsize Bulk} \\
$\lambda_{\text{min}}\ (\text{Alg.}~\ref{alg:experimental-procedure-random-X})$ &$(1.21, 1.22)$ &$(1.27, 1.28)$ &$(1.31, 1.32)$\\
sample-based $\lambda_{\text{sfp}}$ (Alg.~\ref{alg:sample-based-optimal-regularization}) &$(1.35, 1.37)$ &$\mathbf{(1.41, 1.42)}$ &$\mathbf{(1.46, 1.47)}$\\
signal-to-noise $\lambda_{\text{SN}}$~\cite{DobribanWagnerHighDAsymptotics2018} &$(1.35, 1.37)$ &$(1.56, 1.58)$ &$(1.93, 1.94)$\\
default $\lambda_0 = 1$ (e.g.,~\cite{scikit-learn}) &$(1.70, 1.94)$ &$(3.03, 3.39)$ &$(5.12, 5.48)$\\
\hline
\end{tabular}
\end{center}
\label{table:med-MSE-CI-random-X}
\end{table}

\begin{table}
\caption{Median random-X regularization strengths. Covariance as marked. Aspect ratio $\frac{d}{N} = 0.9$, noise $\epsilon=1$.
$95\%$ BCa bootstrap confidence intervals based on $10^4$ bootstrap samples over $N_{\text{test}} = 1000$ 
test examples for ridge parameter estimates derived from $m_{X,y} = 100$ samples $(X,y)$ for each of $m_{\theta} = 100$ generative $\theta$. See Algorithm~\ref{alg:experimental-procedure-random-X}.}
\vspace{-0.2cm}
\begin{center}
\renewcommand{\arraystretch}{1.2}
\setlength{\tabcolsep}{4pt}
\begin{tabular}{cccc}
\hline
{\normalsize $N$} &{\normalsize $20$} &{\normalsize $100$} &{\normalsize $500$}\\
\hline
&{\normalsize Spiked} &&\\
$\lambda_{\text{min}}\ (\text{Alg.}~\ref{alg:experimental-procedure-random-X})$ &$(16.1, 17.4)$ &$(88.4, 91.1)$ &$(449, 457)$\\
sample-based $\lambda_{\text{sfp}}$ (Alg.~\ref{alg:sample-based-optimal-regularization}) &$(9.9, 10.3)$ &$(35.6,36.4)$ &$(147,149)$\\
signal-to-noise $\lambda_{\text{SN}}$~\cite{DobribanWagnerHighDAsymptotics2018} &$(10.4, 10.7)$ &$(15.3, 15.5)$ &$(19.8,19.9)$\\
default $\lambda_0 = 1$ (e.g.,~\cite{scikit-learn}) &$(1,1)$ &$(1,1)$ &$(1,1)$\\
\hline
\end{tabular}
\begin{tabular}{cccc}
&{\normalsize Bulk} &&\\
$\lambda_{\text{min}}\ (\text{Alg.}~\ref{alg:experimental-procedure-random-X})$ &$(14.0, 14.8)$ &$(89.4, 92.3)$ &$(446, 452)$\\
sample-based $\lambda_{\text{sfp}}$ (Alg.~\ref{alg:sample-based-optimal-regularization}) &$(7.8, 8.2)$ &$(27.6, 28.4)$ &$(107, 109)$\\
signal-to-noise $\lambda_{\text{SN}}$~\cite{DobribanWagnerHighDAsymptotics2018} &$(6.6, 6.8)$ &$(13.1, 13.3)$ &$(27.0, 27.2)$\\
default $\lambda_0 = 1$ (e.g.,~\cite{scikit-learn}) &$(1,1)$ &$(1,1)$ &$(1,1)$\\
\hline
\end{tabular}
\end{center}
\label{table:med-alpha-CI-random-X}
\end{table}

\begin{table}
\caption{Median random-X generative parameter estimates. Covariance as marked. Aspect ratio $\frac{d}{N} = 0.9$, noise $\epsilon=1$, generative parameter norm $\Vert \theta \Vert = 1$.
$95\%$ BCa bootstrap confidence intervals based on $10^4$ bootstrap samples over $N_{\text{test}} = 1000$ 
test examples for ridge parameter estimates derived from $m_{X,y} = 100$ data samples $(X,y)$ for each of $m_{\theta} = 100$ true generative parameters, $\theta$. See Algorithm~\ref{alg:experimental-procedure-random-X}.}
\vspace{-0.2cm}
\begin{center}
\renewcommand{\arraystretch}{1.2}
\setlength{\tabcolsep}{4pt}
\begin{tabular}{cccc}
\hline
{\normalsize $N$} &{\normalsize $20$} &{\normalsize $100$} &{\normalsize $500$}\\
\hline
&{\normalsize Spiked} &&\\
rank ${r_p}$ (Eq.~\ref{eq:regRankExpression}, $p=0.25$) &$(12.4, 12.4)$ &$(71.8, 71.8)$ &$(393, 393)$\\
noise $\hat{\epsilon}_p\ (\text{Eq.}~\ref{eq:epsilonRegEstimate}, p=0.25)$ &$(1.07, 1.08)$ &$(1.13, 1.13)$ &$(1.13, 1.14)$\\
norm $\Vert \hat{\theta} \Vert$ (Alg.~\ref{alg:sample-based-optimal-regularization}, $\lambda_0=1$) &$(1.39, 1.40)$ &$(2.72, 2.73)$ &$(5.41, 5.42)$\\
\hline
\end{tabular}
\begin{tabular}{cccc}
&{\normalsize Bulk} &&\\
rank ${r_p}$ (Eq.~\ref{eq:regRankExpression}, $p=0.25$) &$(15.8, 15.8)$ &$(85.5, 85.6)$ &$(442, 442)$\\
noise $\hat{\epsilon}_p\ (\text{Eq.}~\ref{eq:epsilonRegEstimate}, p=0.25)$ &$(0.98, 1.00)$ &$(1.01, 1.02)$ &$(1.0,1.0)$\\
norm $\Vert \hat{\theta} \Vert$ (Alg.~\ref{alg:sample-based-optimal-regularization}, $\lambda_0=1$) &$(1.60, 1.62)$ &$(2.64, 2.66)$ &$(4.07, 4.09)$\\
\hline
\end{tabular}
\end{center}
\label{table:med-eps-CI-random-X}
\end{table}

\subsection{Discussion}
\label{subsection:discussion}

\subsubsection{Generalization across sample sizes and aspect ratios}
Table~\ref{table:med-MSE-CI-random-X} compares random-$X$ generalization of Alg.~\ref{alg:sample-based-optimal-regularization} against default regularization $\lambda_0 = 1$~\cite{scikit-learn} and a sample-based version of the signal-to-noise approach of~\cite{DobribanWagnerHighDAsymptotics2018} (section~\ref{subsection:regularization-approaches}). Also see Fig.~\ref{fig:msesAspectRatio0.9NVarious}.
For spiked covariance, the signal-to-noise approach performs best at smaller sample sizes ($N \lesssim 30$). Alg.~\ref{alg:sample-based-optimal-regularization} is the best for moderate and larger sample sizes, 
and its advantage over signal-to-noise grows with sample size.

Table~\ref{table:med-alpha-CI-random-X} links performance of these approaches to the associated regularization strengths relative to the optimal $\lambda_{\text{min}}$: the values $\lambda_{\text{sfp}}$ 
produced by Alg.~\ref{alg:sample-based-optimal-regularization} are much lower than the optimal values $\lambda_{\text{min}},$ but the values $\lambda_{\text{SN}}$ produced by the signal-to-noise approach are lower still, contributing to its
weaker performance as compared with Alg.~\ref{alg:sample-based-optimal-regularization}. Similarly, default regularization $\lambda=1$ performs poorly because its
data-blind scale is too small for the underlying data distribution.

Fig.~\ref{fig:msesN100dVarious} illustrates the fact that the performance ranking seen in Table~\ref{table:med-MSE-CI-random-X} in the underparameterized case persists across data aspect ratios, including the overparameterized regime, with Algorithm~\ref{alg:sample-based-optimal-regularization} ranking best among sample-based approaches,
followed by the signal-to-noise approach, with default regularization $\lambda=1$ placing a distant last.

\subsubsection{Generalization across noise levels}
\label{subsection:mseVsNoise}

Out-of-sample error grows with noise level $\epsilon$ as it does with sample size. Performance differences
among approaches become clearer by considering error relative to the baseline associated with the
optimal regularization strength, $\lambda_{\text{min}}$. See Fig.~\ref{fig:msesN100d90EpsVariousRelative}.
Alg.~\ref{alg:sample-based-optimal-regularization} generalizes best among approaches except at low noise levels 
$\epsilon \lesssim 0.03$.
Differences are magnified at larger sample sizes $N > 100$ (Table~\ref{table:mseVsNoiseCINvariousTableAppendix}),
and reduced at lower sizes, largely becoming insignificant when $N < 50$.

\begin{figure}[h!]
    \centering
    \includegraphics[width=\figurewidth, trim = 0 2 4 2, clip=True]{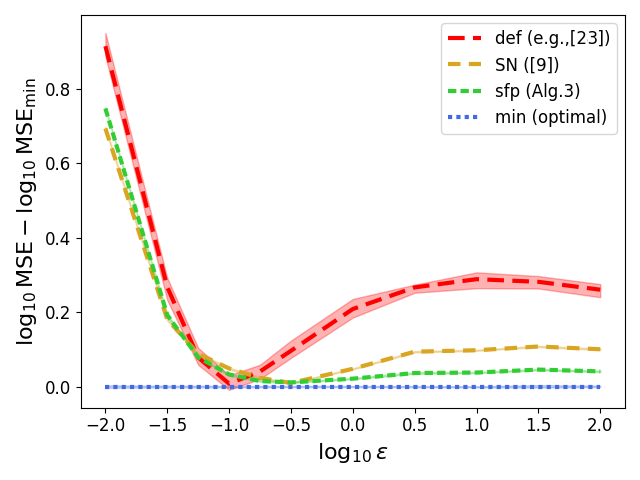}\\
    \caption{Median estimated out-of-sample MSE relative to optimal as a function of noise amplitude, $\epsilon$. 
    Spiked covariance model, $N=100$, $d=90$.
    Shading shows BCa bootstrap $95\%$ confidence intervals over $100$ 
    repetitions of each of $100$ generative vectors $\theta$; visually perceptible differences are significant. 
    Signal-to-noise~\cite{DobribanWagnerHighDAsymptotics2018} is best at low noise levels $\epsilon \lesssim 0.03$;
    default $\lambda=1$ best for $\epsilon \approx 0.1$; 
    Alg.~\ref{alg:sample-based-optimal-regularization} best for moderate to high noise, 
    $\epsilon \gtrsim 0.3$. Similar results with less vertical compression (not shown) 
    occur for bulk covariance model, and for larger sample sizes. At smaller sample sizes 
    (not shown, e.g., $N=20$), Alg.~\ref{alg:sample-based-optimal-regularization} is best at noise levels 
    $\epsilon \lesssim0.03$ and equivalent to signal-to-noise~\cite{DobribanWagnerHighDAsymptotics2018}
    for $\epsilon \gtrsim 0.03$.}
    \label{fig:msesN100d90EpsVariousRelative}
\end{figure}

\subsubsection{Suboptimality of the signal-to-noise approach}

The model-based signal-to-noise regularization strength $\lambda_{\text{SN}} = d\, \epsilon^2 / \Vert \theta \Vert^2$ of~\cite{DobribanWagnerHighDAsymptotics2018} would provide strong generalization if used with the true generative parameter values $\epsilon=1, \Vert \theta \Vert=1$ used in Table~\ref{table:med-alpha-CI-random-X}: the resulting strength $\lambda_{\text{SN}} = d$ lies within the confidence interval for the optimal strength $\lambda_{\text{min}}$ for the larger sample sizes $N=100, 500$, and remains
close to that interval for the smaller size $N=20$. 

In the sample-based context, the signal-to-noise approach suffers from its sensitivity to the
ridge-estimated parameter norm, $\Vert \hat{\theta} \Vert$: 
Table~\ref{table:med-eps-CI-random-X} shows that the latter values are 
much higher (though less so than an unregularized estimate) than the true norm of $1$,
yielding $\lambda_{\text{SN}}$ values far below optimal. 

Using a positive exponent $p=0.25$ in the regularized rank computation of Eq.~\ref{eq:epsilonRegEstimate} compensates somewhat by increasing the estimated noise level $\hat{\epsilon}_p$, 
slightly raising the estimated signal-to-noise ratio $d \,{\hat{\epsilon}}^2 / \Vert \hat{\theta} \Vert^2$.
This effect is minor, however: the regularized rank
${r_p}$ is only $\approx 20\%$ smaller than the nominal rank $d$ at higher sample sizes (Table~\ref{table:med-eps-CI-random-X}), 
not nearly enough for the resulting larger $\hat{\epsilon}_{p}$ to offset the excess in the
$\Vert \hat{\theta} \Vert$ norm estimate, 
yielding small regularization strengths $\lambda_{\text{SN}}$ and weaker
generalization for $\lambda_{\text{SN}}$ in Table~\ref{table:med-MSE-CI-random-X}.

Unlike signal-to-noise, Alg.~\ref{alg:sample-based-optimal-regularization}
accounts for the relationship between the generative parameter $\theta$ and the data covariance
geometry of $X$ (via the terms $v^T_j \theta$ in Eq.~\ref{eq:lambdaUpdateOperator}).
This likely also contributes to the stronger generalization of Alg.~\ref{alg:sample-based-optimal-regularization}.

\begin{figure}[h!]
    \centering
    \includegraphics[width=\figurewidth, trim = 0 2 4 2, clip=True]{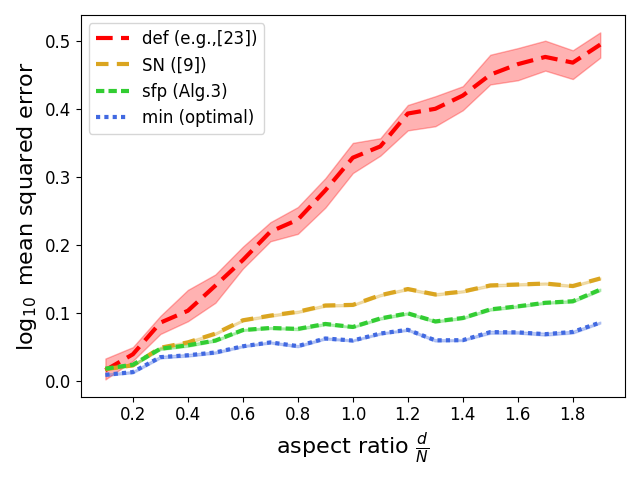}\\
    \caption{Median estimated out-of-sample MSE as a function of data aspect ratio. 
    Spiked covariance model, $N=100$, $\epsilon=1$. 
    Shading shows BCa bootstrap $95\%$ confidence intervals over $100$ 
    repetitions of each of $100$ generative vectors $\theta$; visually perceptible differences are significant. 
    Algorithm~\ref{alg:sample-based-optimal-regularization} is best for aspect ratios $\frac{d}{N} > 0.4$.
    Median ratio of MSE difference between $\lambda_{\text{SN}}$, $\lambda_{\text{min}}$ to MSE difference
    between $\lambda_{\text{sfp}}$, $\lambda_{\text{min}}$ is $1.9$.} 
    \label{fig:msesN100dVarious}
\end{figure}

\section{Conclusions}
\label{section:conclusions}

We present a fixed-point iteration approach for finding the optimal ridge regularization penalty in the fixed $X$ setting, assuming data generated by a distribution of bounded covariance and additive isotropic noise of finite
variance. 
We prove that the proposed technique converges for sufficiently small values of the noise variance. We further propose an approach for sample-based estimation of the generative parameters needed for the fixed-point computation. Experimental results show that the combined sample-based regularization approach of Algorithm~\ref{alg:sample-based-optimal-regularization} attains near-optimal generalization across a wide range of data aspect ratios and moderate to high noise levels, even in the random $X$ setting, better than both a data-agnostic regularization strength and a sample-based version of the
asymptotically optimal signal-to-noise regularization strength of~\cite{DobribanWagnerHighDAsymptotics2018}.
The generalization advantage of the proposed approach grows with sample size and noise level.
There is room for future work on the sample-based estimation procedure, extreme aspect ratios $d/N \gg 1$,
and out-of-distribution generalization.

\subsubsection*{Author roles.}
JT contributed to the analysis, proof of Theorem~\ref{thm:contractivenessOfLambdaOperator}, and experiments in the underparameterized case. SA conceived the research and analytical approach, carried out the analysis and proof
of Theorem~\ref{thm:contractivenessOfLambdaOperator} in the general case, designed and carried out the experimental
evaluation, and wrote the paper.

\bibliographystyle{IEEEtran}

\small
\bibliography{bibcontrol, optimalRidgeRegression}
\normalsize

\newpage

\renewcommand{\thetable}{%
  \thesection.%
  \arabic{table}%
}
\setcounter{table}{0} 

\renewcommand{\thefigure}{%
  \thesection.%
  \arabic{figure}%
}
\setcounter{figure}{0}
 
\appendices
\section{Supplementary information for the paper\\
{\normalfont Optimal ridge regularization revisited}}
\label{section:appendix}

\bigskip
\subsection{Predictive bias and variance: derivation details}
\label{subsection:biasVarianceSupplement}

We describe how the parameter estimate of Eq.~\ref{eq:ridgeParameterExpression} leads,
via the data-generative model of Eq.~\ref{eq:dataGenerativeModel} and the SVD $X = U S V^T$, to the
predictive bias and variance expressions of Eqs.~\ref{eq:ridgePredictionBias},~\ref{eq:ridgePredictionVar}, 
regardless of data aspect ratio. 
We assume that the unlabeled data matrix $X$ has full rank.

Following Appendix B of~\cite{patil2024optimal}, the pseudo-inverse expression of Eq.~\ref{eq:ridgeParameterExpression} for the ridge parameter estimate $\hat{\theta}$ can be expressed as a true inverse, with slightly
different forms in the underparameterized and overparameterized cases. We examine these cases
separately in~\ref{subsection:underparameterized-details} and~\ref{subsection:overparameterized-details} below. 

In both cases,
we arrive below at Eq.~\ref{eq:ridgeParameterSVD}, which immediately yields Eq.~\ref{eq:ridgePredictionSVD}.
The form of the block matrix in each of Eqs.~\ref{eq:blocksUnderparameterized},~\ref{eq:blocksOverparameterized} 
then leads to the summation expressions in Eqs.~\ref{eq:ridgePredictionBias}-~\ref{eq:ridgePredictionVar}
involving only the first $\mindN$ singular values and vectors,
where $\mindN$ denotes the minimum of $d,N$ as in the main text.

\subsubsection{Underparameterized case, $d < N$}
\label{subsection:underparameterized-details}
In this case, the pseudoinverse in Eq.~\ref{eq:ridgeParameterExpression} is already an inverse.
Using the SVD $X = U S V^T$ and (self-)orthogonality of $U$, $V$, we 
express the ridge-estimated parameter vector as follows:
\begin{align}
\hat{\theta} &= (V S^T S V^T + \lambda I_d)^{-1} V S^T U^T y
\label{eq:underparameterizedParameterVector}
\end{align}
Since $S$ has size $N \times d$, the product $S^T S$ is the diagonal $d \times d$ matrix $\text{diag}(\sigma^2_1, \cdots, \sigma^2_d)$ of squared singular values of $X$; the term $S^T$ at far right of Eq.~\ref{eq:underparameterizedParameterVector} has zeros in its last $N-d$
columns, however. This yields Eq.~\ref{eq:blocksUnderparameterized},
since $V$ is orthogonal. 
\begin{align}
\hat{\theta} &= V \text{diag}( (\sigma^2_j + \lambda)^{-1} ) S^T U^T y
\label{eq:blocksUnderparameterized}
\\
&=
V
\begin{bmatrix}
\begin{array}{ccc}
\frac{\sigma_1}{\sigma^2_1 + \lambda} & & \\
 &\ddots &  \\
 & & \frac{\sigma_d}{\sigma^2_d + \lambda}
\end{array}
\vline
&\begin{matrix}
\\
\scalebox{3}{0}\\
\\
\end{matrix}
\end{bmatrix}
U^T y
\nonumber
\end{align}
Substituting the expression $y = X \theta + \epsilon z$ from Eq.~\ref{eq:b} and using orthogonality of $U$, 
we find the following, where parentheses denote the respective $d \times d$ diagonal matrices. 
\begin{align}
\hat{\theta}
&=
V
\left (
\frac{\sigma^2_j}{\sigma^2_j + \lambda}
\right )
V^T \theta
\ + \ 
\epsilon
V
\begin{bmatrix}
\left (
\frac{\sigma_j}{\sigma^2_j + \lambda}
\right )
\ \vline
\ \begin{matrix}
\\
\scalebox{2}{0}\\
\\
\end{matrix}
\end{bmatrix}
U^T z
\label{eq:blocksUnderparameterized2}
\end{align}
The vector of predicted labels $\hat{y} = X \hat{\theta}$ therefore satisfies:
\begin{align}
\hat{y}
&=
U
\begin{bmatrix}
\left (
\frac{\sigma^3_j}{\sigma^2_j + \lambda}
\right )
\\
\\
\begin{array}{ccc}
\cline{1-3}
\\
&\scalebox{2}{0}&
\end{array}
\end{bmatrix}
V^T \theta
\ + \ 
\epsilon
U
\begin{bmatrix}
\begin{array}{ccc}
\left (
\frac{\sigma^2_j}{\sigma^2_j + \lambda}
\right )
&\vline
&\begin{matrix}
\\
\scalebox{2}{0}\\
\\
\end{matrix}
\\
\cline{1-3}
\begin{matrix}
&\scalebox{2}{0}
\end{matrix}
&\vline
&\begin{matrix}
\\
\scalebox{2}{0}\\
\\
\end{matrix}
\end{array}
\end{bmatrix}
U^T z
\label{eq:blocksUnderparameterized3}
\end{align}
Eq.~\ref{eq:blocksUnderparameterized3} yields Eqs.~\ref{eq:ridgePredictionBias},~\ref{eq:ridgePredictionVar}
in the underparameterized case.

\subsubsection{Overparameterized case, $d \ge N$}
\label{subsection:overparameterized-details}

In this case, the expression in Eq.~\ref{eq:overparameterizedInverseExpression} below holds 
(see~\cite{patil2024optimal}, appendix~B, Eq.~(15),
noting the factor $N$ difference between the convention in~\cite{patil2024optimal} and the present paper).
\begin{align}
\hat{\theta} &= X^T (X X^T + \lambda I_N)^{-1}  y
\label{eq:overparameterizedInverseExpression}
\end{align}
Using the SVD $X = USV^T$ and orthogonality of $V$, the parameter vector
can be expressed as in Eq.~\ref{eq:blocksOverparameterized}.
\begin{align}
\hat{\theta} &= V S^T U^T (U S S^T U^T + \lambda I_N)^{-1}  y \nonumber
\\
&=
V
\begin{bmatrix}
\begin{array}{ccc}
\frac{\sigma_1}{\sigma^2_1 + \lambda} & & \\
 &\ddots &  \\
 & & \frac{\sigma_d}{\sigma^2_d + \lambda}\\
\\
\cline{1-3}
\end{array}
\\
\begin{array}{ccc}
\\
&\scalebox{3}{0}&\\
\\
\end{array}
\end{bmatrix}
U^T y
\label{eq:blocksOverparameterized}
\end{align}
Substituting $y = X\theta + \epsilon z$ from Eq.~\ref{eq:b}, we obtain Eq.~\ref{eq:blocksOverparameterized2};
parenthesized expressions represent $d \times d$ diagonal matrices.
\begin{align}
\hat{\theta} 
&=
V
\begin{bmatrix}
\begin{array}{ccc}
\left (
\frac{\sigma^2_j}{\sigma^2_j + \lambda}
\right )
&\vline
&\begin{matrix}
\\
\scalebox{2}{0}\\
\\
\end{matrix}
\\
\cline{1-3}
\begin{matrix}
&\scalebox{2}{0}
\end{matrix}
&\vline
&\begin{matrix}
\\
\scalebox{2}{0}\\
\\
\end{matrix}
\end{array}
\end{bmatrix}
V^T \theta
\ + \ 
\epsilon
V
\begin{bmatrix}
\left (
\frac{\sigma_j}{\sigma^2_j + \lambda}
\right )
\\
\\
\begin{array}{ccc}
\cline{1-3}
\\
&\scalebox{2}{0}&
\end{array}
\end{bmatrix}
U^T z
\label{eq:blocksOverparameterized2}
\end{align}
The vector of predicted labels $\hat{y} = X \hat{\theta}$ takes the form below:
\begin{align}
\label{eq:blocksOverparameterized3}
\hat{y} 
&=
U
\begin{bmatrix}
\left (
\frac{\sigma^3_j}{\sigma^2_j + \lambda}
\right )
\ \vline
\ \begin{matrix}
\\
\scalebox{2}{0}\\
\\
\end{matrix}
\end{bmatrix}
V^T \theta
\ + \ 
\epsilon
U
\left (
\frac{\sigma^2_j}{\sigma^2_j + \lambda} 
\right )
U^T z
\end{align}
Eq.~\ref{eq:blocksOverparameterized3} yields Eqs.~\ref{eq:ridgePredictionBias},~\ref{eq:ridgePredictionVar}
in the overparameterized case.

\subsection{MSE stationarity condition}
\label{subsection:mse-stationarity-appendix}

We derive the first-order stationarity condition in Eq.~\ref{eq:stationarityOptimalLambda}.
Using Eqs.~\ref{eq:ridgePredictionBias}-\ref{eq:mseFromBiasVarianceNoise}, minimization of expected
out-of-sample mean squared error requires the following condition.
\begin{align}
\frac{d}{d\lambda}
\left ( 
\lambda^2 \sum_{j=1}^{\mindN} 
\left ( 
\frac{\sigma_j v^T_j \theta}{\sigma^2_j + \lambda}
\right )^{\!2}
+\ 
\epsilon^2
\sum_{j=1}^{\mindN}
\left (
\frac{\sigma^2_j}{\sigma^2_j + \lambda}
\right )^{\!2}
\right )
=
0
\end{align}
We differentiate directly, move the $\epsilon^2$ term to the right of the equation, and eliminate the constant factor $2$ throughout. 
\begin{align}
\lambda \sum_{j=1}^{\mindN} 
\left ( 
\frac{\sigma_j v^T_j \theta}{\sigma^2_j + \lambda}
\right )^{\!2}
- 
\lambda^2 \sum_{j=1}^{\mindN} 
\frac{(\sigma_j v^T_j \theta)^2}{(\sigma^2_j + \lambda)^3}
\ =\ 
\epsilon^2
\sum_{j=1}^{\mindN}
\frac{\sigma^4_j}{(\sigma^2_j + \lambda)^3}
\end{align}
Combining the two sums on the left into one having as its $j$th summand the product of $1 - \lambda/(\sigma^2_j + \lambda)$ with the $j$th term of the first (then simplifying), we obtain Eq.~\ref{eq:stationarityOptimalLambda}.

\subsection{Supplementary experimental data}
\label{section:supplementaryDataAppendix}

\subsubsection{Convergence of bootstrap calculations}
\label{subsection:bootstrapConvergenceAppendix}

BCa bootstrap confidence intervals are observed to converge once median aggregation includes $\approx 100$ parameter vectors for the sample sizes tested; see Figs.~\ref{fig:mseCIConvergence}-\ref{fig:lambdaCIConvergence}.
While a small amount of variation remains, the value $m_{\theta} = 100$ strikes a good balance between convergence
and computation time. 

\begin{figure}[H]
    \centering
    \includegraphics[width=\figurewidth, trim = 0 2 0 10, clip=True]{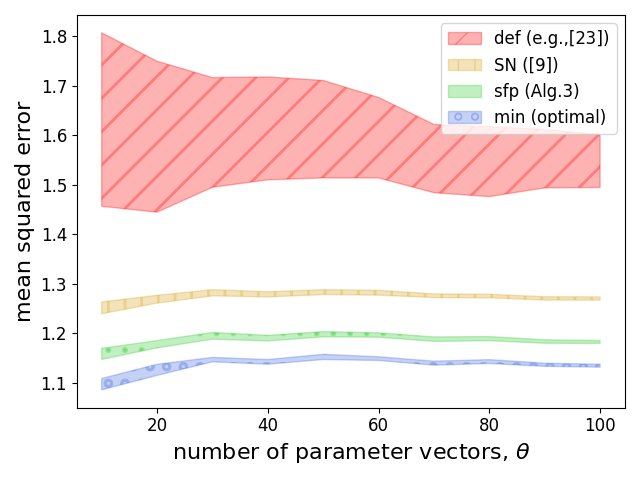}\\
    \caption{Convergence of MSE BCa bootstrap $95\%$ confidence intervals (CI) with number of generative parameter vectors. 
    CI and relative ordering are stable after $\approx 100$ parameter
    vectors. Spiked covariance, $(N, d) = (200, 120)$, noise $\epsilon=1$.}
    \label{fig:mseCIConvergence}
\end{figure}

\begin{figure}[h!]
    \centering
    \includegraphics[width=\figurewidth, trim = 0 2 0 10, clip=True]{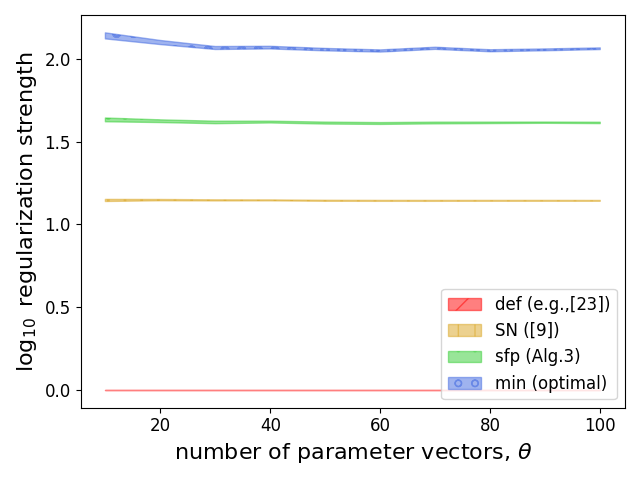}\\
    \caption{Convergence of ridge penalty BCa bootstrap $95\%$ confidence intervals (CI) with number of generative
    vectors. CI and relative ordering are stable after $\approx 100$ parameter vectors. 
    Spiked covariance, $(N, d) = (200, 120)$, noise $\epsilon=1$.}
    \label{fig:lambdaCIConvergence}
\end{figure}

\subsubsection{Performance across sample sizes and aspect ratios}

Tables~\ref{table:mseVsSampleSizeCITableAppendix}-\ref{table:mseVsAspectRatioCITableAppendix} support
the statement that sample data-based regularization via either Alg.~\ref{alg:sample-based-optimal-regularization}
or the sample-based version of the signal-to-noise approach of~\cite{DobribanWagnerHighDAsymptotics2018} is beneficial for all but the smallest sample sizes and aspect ratios. Furthermore, these tables show as stated in the main text that Alg.~\ref{alg:sample-based-optimal-regularization} performs better than signal-to-noise in most cases, 
with a performance advantage that increases with sample size and with aspect ratio.
Figs.~\ref{fig:mseBoxplotsGaussN100d90Eps1},~\ref{fig:msesN100d90Eps1SingleTheta} illustrate the typical ordering of median MSE and regularization strengths among approaches.

\begin{figure}[h!]
    \centering
    \includegraphics[width=\figurewidth, trim = 0 2 4 10, clip=True]{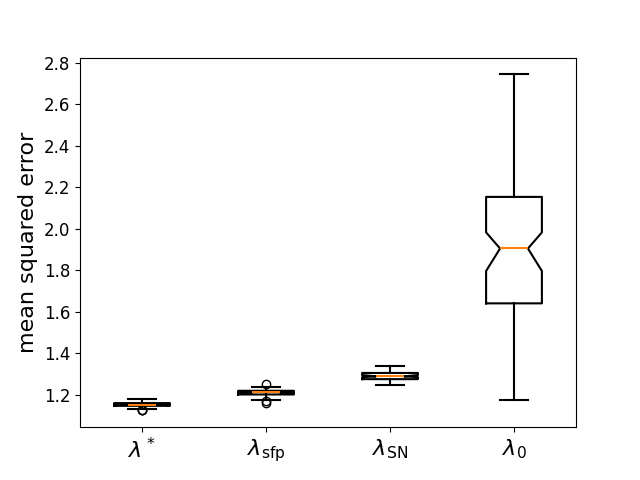}\\
    \caption{Median estimated out-of-sample MSE. 
    Spiked covariance model, $(N, d) = (100, 90)$, noise $\epsilon=1$. Notches show BCa bootstrap $95\%$
    confidence intervals. Values aggregated over $100$ random-X repetitions for each of $100$
    generative $\theta$. Notation is as in section~\ref{subsection:regularization-approaches}.
    Also see Table~\ref{table:med-MSE-CI-random-X}.}
    \label{fig:mseBoxplotsGaussN100d90Eps1}
\end{figure}

\begin{figure}[h!]
    \centering
    \includegraphics[width=\figurewidth, trim = 0 2 4 10, clip=True]{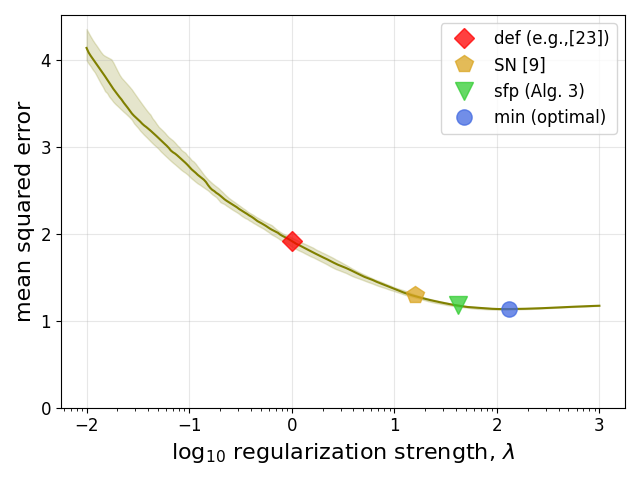}\\
    \caption{Median estimated out-of-sample MSE as a function of regularization strength for a sample
    generative parameter vector, $\theta$. 
    Spiked covariance model, $N=200$, $d=180$, noise $\epsilon=1$. Shading shows BCa bootstrap $95\%$
    confidence intervals. Values aggregated over $100$ random-X repetitions.}
    \label{fig:msesN100d90Eps1SingleTheta}
\end{figure}

\begin{table}[h!]
\caption{Median out-of-sample random-X mean squared error across sample sizes (see Fig.~\ref{fig:msesAspectRatio0.9NVarious}); best sample-based results in {\bf bold}. Spiked covariance model, aspect ratio $\frac{d}{N} = 0.9$, noise $\epsilon=1$. BCa bootstrap $95\%$ confidence intervals based on $10^4$ bootstrap samples. Values are aggregated over $100$ random-X repetitions for each of $100$ generative parameters $\theta$ as in Algorithm~\ref{alg:experimental-procedure-random-X}. $\lambda_{\text{sfp}}$ and $\lambda_{\text{SN}}$ outperform $\lambda_0=1$ for $N \ge 20$;
$\lambda_{\text{sfp}}$ is the best performer for most sample sizes, except for $N \approx 20$, where $\lambda_{\text{SN}}$ is best.}
\centering
\begin{tabular}{ccccc}
{\normalsize $N$}		&{\normalsize $\lambda_{\text{min}}$}		&{\normalsize $\lambda_{\text{sfp}}$}	&{\normalsize $\lambda_{\text{SN}}$}	&{\normalsize $\lambda_0$}\\
\hline
10		&(1.02, 1.02) & (1.04, 1.04) & (1.04, 1.04) & (1.00, 1.08)\\
20     	&(1.04, 1.05) & (1.15, 1.17) & {\bf (1.13, 1.15)} & (1.34, 1.56)\\
30		&(1.06, 1.08) & (1.15, 1.16) & (1.14, 1.16) & (1.31, 1.57)\\
40      	&(1.08, 1.09) & (1.16, 1.17) & (1.16, 1.17) & (1.47, 1.66)\\
60		&(1.10, 1.11) & {\bf (1.17, 1.18)} & (1.21, 1.22) & (1.64, 1.82)\\
80      	&(1.16, 1.17) & {\bf (1.23, 1.24)} & (1.28, 1.29) & (1.73, 1.87)\\
100		&(1.15, 1.16) & {\bf (1.21, 1.22)} & (1.28, 1.29) & (1.80, 1.99)\\
120		&(1.16, 1.17) & {\bf (1.22, 1.23)} & (1.32, 1.33) & (1.84, 2.00)\\
160    	&(1.16, 1.17) & {\bf (1.21, 1.22)} & (1.35, 1.35) & (1.94, 2.11)\\
320     	&(1.17, 1.18) & {\bf (1.23, 1.23)} & (1.47, 1.47) & (2.22, 2.32)\\
500		&(1.18, 1.18)`& {\bf (1.23, 1.23)}	& (1.55, 1.56) & (2.40, 2.52)\\
\hline
\end{tabular}
\label{table:mseVsSampleSizeCITableAppendix}
\end{table}

\begin{table}[h!]
\caption{Median out-of-sample random-X MSE across aspect ratios (see Fig.~\ref{fig:msesN100dVarious}); best sample-based results in {\bf bold}. Spiked covariance, noise $\epsilon=1$. BCa bootstrap $95\%$ confidence intervals based on $10^4$ bootstrap samples. Values aggregated over $100$ random-X repetitions for each of $100$ generative parameters $\theta$ as in Alg.~\ref{alg:experimental-procedure-random-X}. Performance differences grow with aspect ratio and sample size. Alg.~\ref{alg:sample-based-optimal-regularization} is significantly better than signal-to-noise when data dimensionality $d \gtrsim 40$; the two perform statistically equally in nearly all other cases.}
\setlength{\tabcolsep}{6pt}
\centering
\begin{tabular}{ccccc}
{\normalsize $d/N$}		&{\normalsize $\lambda_{\text{min}}$}	&{\normalsize $\lambda_{\text{sfp}}$}	&{\normalsize $\lambda_{\text{SN}}$}	&{\normalsize $\lambda_0$}\\
\hline
\\&&{\normalsize $N = 50$}\\
0.1		& (0.99, 1.00) & (1.03, 1.04) & (1.03, 1.04) & (0.97, 1.09)\\
0.2       & (1.07, 1.08) & (1.11, 1.12) & {\bf (1.10, 1.11)} & (1.08, 1.22)\\
0.3       & (1.03, 1.04) & (1.07, 1.08) & (1.07, 1.08) & (1.06, 1.14)\\
0.4       & (1.07, 1.08) & (1.13, 1.14) & (1.12, 1.14) & (1.20, 1.32)\\
0.5       & (1.07, 1.08) & (1.13, 1.14) & (1.13, 1.14) & (1.25, 1.35)\\
0.6       & (1.07, 1.08) & (1.13, 1.14) & (1.13, 1.15) & (1.32, 1.41)\\
0.7       & (1.11, 1.12) & {\bf (1.17, 1.18)} & (1.18, 1.19) & (1.39, 1.52)\\
0.8       & (1.03, 1.04) & {\bf (1.10, 1.11)} & (1.12, 1.13) & (1.45, 1.71)\\
0.9       & (1.10, 1.11) & {\bf (1.17, 1.18)} & (1.19, 1.20) & (1.47, 1.72)\\
1.0       & (1.12, 1.14) & {\bf (1.20, 1.22)} & (1.23, 1.24) & (1.71, 1.94)\\
1.1       & (1.12, 1.13) & {\bf (1.21, 1.22)} & (1.24, 1.25) & (1.85, 2.02)\\
1.2       & (1.12, 1.13) & {\bf (1.21, 1.22)} & (1.24, 1.25) & (1.80, 2.10)\\
1.3       & (1.11, 1.12) & {\bf (1.21, 1.23)} & (1.25, 1.26) & (1.86, 2.12)\\
1.4       & (1.16, 1.17) & {\bf (1.27, 1.28)} & (1.30, 1.31) & (2.09, 2.32)\\
1.5       & (1.14, 1.15) & {\bf (1.26, 1.27)} & (1.29, 1.30) & (2.07, 2.49)\\
1.6       & (1.13, 1.14) & {\bf (1.26, 1.28)} & (1.29, 1.30) & (2.20, 2.55)\\
1.7       & (1.15, 1.16) & {\bf (1.28, 1.29)} & (1.30, 1.31) & (2.28, 2.59)\\
1.8       & (1.15, 1.16) & {\bf (1.29, 1.30)} & (1.30, 1.31) & (2.39, 2.73)\\
1.9       & (1.19, 1.19) & (1.32, 1.34) & (1.32, 1.34) & (2.21, 2.51)\\
\hline
\\&&{\normalsize $N = 100$}\\
0.1&		(1.02, 1.02)	& (1.04, 1.04) &(1.04, 1.04) &(1.00, 1.08)\\
0.2&       (1.03, 1.03) & (1.05, 1.06) &(1.05, 1.06) &(1.07, 1.12)\\
0.3&       (1.08, 1.09) & (1.11, 1.12) &(1.11, 1.12) &(1.17, 1.24)\\
0.4&       (1.09, 1.09) & {\bf (1.12, 1.13)} &(1.13, 1.14) &(1.22, 1.36)\\
0.5&       (1.10, 1.10) & {\bf (1.14, 1.15)} &(1.17, 1.18) &(1.30, 1.44)\\
0.6&       (1.12, 1.13) & {\bf (1.18, 1.19)} &(1.22, 1.23) &(1.46, 1.57)\\
0.7&       (1.13, 1.14) & {\bf (1.19, 1.20)} &(1.24, 1.25) &(1.60, 1.71)\\
0.8&       (1.12, 1.13) & {\bf (1.19, 1.20)} &(1.26, 1.27) &(1.64, 1.80)\\
0.9&       (1.15, 1.16) & {\bf (1.21, 1.22)} &(1.28, 1.29) &(1.80, 1.99)\\
1.0&       (1.14, 1.15) & {\bf (1.20, 1.20)} &(1.29, 1.30) &(2.02, 2.23)\\
1.1&       (1.17, 1.18) & {\bf (1.23, 1.24)} &(1.33, 1.34) &(2.14, 2.29)\\
1.2&       (1.18, 1.19) & {\bf (1.25, 1.26)} &(1.36, 1.37) &(2.33, 2.54)\\
1.3&       (1.14, 1.15) & {\bf (1.22, 1.23)} &(1.33, 1.34) &(2.37, 2.62)\\
1.4&       (1.14, 1.15) & {\bf (1.23, 1.24)} &(1.35, 1.36) &(2.50, 2.71)\\
1.5&       (1.17, 1.18) & {\bf (1.27, 1.28)} &(1.38, 1.39) &(2.73, 3.02)\\
1.6&       (1.17, 1.18) & {\bf (1.28, 1.29)} &(1.38, 1.39) &(2.77, 3.08)\\
1.7&       (1.16, 1.18) & {\bf (1.30, 1.31)} &(1.38, 1.39) &(2.86, 3.16)\\
1.8&       (1.17, 1.18) & {\bf (1.30, 1.31)} &(1.37, 1.38) &(2.78, 3.06)\\
1.9&       (1.21, 1.22) & {\bf (1.36, 1.37)} &(1.41, 1.42) &(2.98, 3.25)\\
\hline
\\&&{\normalsize $N = 200$}\\
0.2 &	(1.07, 1.07) & (1.08, 1.09) & (1.09, 1.09) & (1.13, 1.16)\\
0.4 &	(1.10, 1.11) & {\bf (1.14, 1.14)} & (1.18, 1.18) & (1.29, 1.39)\\
0.6 &	(1.13, 1.14) & {\bf (1.18, 1.19)} & (1.27, 1.27) & (1.49, 1.60)\\
0.8 &	(1.14, 1.15) & {\bf (1.20, 1.21)} & (1.34, 1.35) & (1.88, 2.03)\\
1.0 &	(1.16, 1.17) & {\bf (1.20, 1.20)} & (1.37, 1.38) & (2.23, 2.54)\\
1.2 &	(1.20, 1.21) & {\bf (1.25, 1.26)} & (1.47, 1.48) & (2.78, 2.94)\\	
1.4 &	(1.20, 1.21) & {\bf (1.27, 1.28)} & (1.50, 1.51) & (3.11, 3.35)\\
1.6 &	(1.21, 1.22) & {\bf (1.32, 1.32)} & (1.52, 1.53) & (3.38, 3.66)\\
1.8 &	(1.20, 1.20) & {\bf (1.34, 1.34)} & (1.48, 1.49) & (3.36, 3.68)\\
\hline
\end{tabular}
\label{table:mseVsAspectRatioCITableAppendix}
\end{table}

\newpage

\subsubsection{Performance as a function of noise level}

Fig.~\ref{fig:msesN100d90EpsVariousAbsolute} provides a wide view of the behavior of out-of-sample
mean squared error (MSE) as a function of noise level. Note the logarithmic scales, corresponding to
noise levels $10^{-2} \le \epsilon \le 10^2$ and MSE values $10^{-4} \le \text{mse} \le 10^4$. 
MSE of all of the regularization methods grows asymptotically like $\epsilon^2$, 
visually approximating the MSE noise lower bound.

\begin{figure}[h!]
    \centering
    \includegraphics[width=\figurewidth, trim = 0 2 4 10, clip=True]{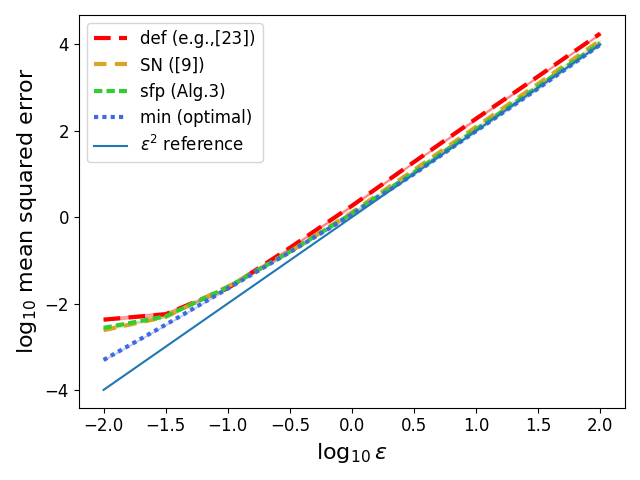}\\
    \caption{Median estimated out-of-sample MSE as a function of noise amplitude, $\epsilon$. 
    Spiked covariance model, $N=100$, $d=90$; results for bulk covariance model are similar,
    with slightly less vertical compression. 
    At this wide scale, all models approximate the reference $\text{MSE} = \epsilon^2$. 
    See Fig.~\ref{fig:msesN100d90EpsVariousRelative} for a closer view based on
    MSE relative to the optimal value.}
    \label{fig:msesN100d90EpsVariousAbsolute}
\end{figure}

Table~\ref{table:mseVsNoiseCINvariousTableAppendix} documents generalization at several 
noise levels, $\epsilon$, for sample sizes $N = 50, 100, 200$.
As discussed in section~\ref{subsection:mseVsNoise} of the main text, differences
between sample-based regularization methods are minor at smaller sample sizes ($N=50$ here), and increase with
sample size. Both Alg.~\ref{alg:sample-based-optimal-regularization} and signal-to-noise~\cite{DobribanWagnerHighDAsymptotics2018} outperform default regularization $\lambda_0 = 1$
at all noise levels at the aspect ratio $d/N = 0.9$ considered here.
Alg.~\ref{alg:sample-based-optimal-regularization} is the best sample-based performer at
higher noise levels $\epsilon > 1$ ($\log_{10} \epsilon > 0$), across sample sizes.

\begin{table}[ht]
\caption{Median $\log_{10}$ out-of-sample random-X MSE across noise levels (see Figs.~\ref{fig:msesN100d90EpsVariousRelative} and~\ref{fig:msesN100d90EpsVariousAbsolute}); best sample-based results in {\bf bold}. Spiked covariance, aspect ratio $\frac{d}{N} = 0.9$. BCa bootstrap $95\%$ confidence intervals based on $10^4$ bootstrap samples. Values aggregated over $100$ random-X repetitions for each of $100$ generative parameters $\theta$ as in Algorithm~\ref{alg:experimental-procedure-random-X}.}
\setlength{\tabcolsep}{4pt}
\centering
\begin{tabular}{ccccc}
{\normalsize $\log_{10} \epsilon$}		&{\normalsize $\lambda_{\text{min}}$}		&{\normalsize $\lambda_{\text{sfp}}$}	&{\normalsize $\lambda_{\text{SN}}$}	&{\normalsize $\lambda_0$}\\
\hline
\\&&{\normalsize $N = 50$}\\
-2.0&		(-3.33, -3.32) & (-2.35, -2.34) & {\bf (-2.37, -2.35)} & (-2.19, -2.11)\\
-1.5&       (-2.52, -2.51) & {\bf (-2.18, -2.17)} & (-2.17, -2.16) & (-2.08, -2.01)\\
-1.0&       (-1.68, -1.68) & {\bf (-1.62, -1.61)} & (-1.60, -1.59) & (-1.69, -1.62)\\
-0.5&       (-0.84, -0.83) & {\bf (-0.81, -0.81)} & (-0.81, -0.80) & (-0.79, -0.75)\\
0.0&       (0.03,  0.04) & {\bf (0.07,  0.07)} & (0.07,  0.08) & (0.20,  0.24)\\
0.5&       (1.00,  1.00) & {\bf (1.03,  1.03)} & (1.06,  1.06) & (1.21,  1.27)\\
1.0&       (1.98,  1.98) & {\bf (2.02,  2.03)} & (2.05,  2.06) & (2.18,  2.23)\\
1.5&       (2.97,  2.98) & {\bf (3.02,  3.02)} & (3.05,  3.05) & (3.17,  3.24)\\
2.0&       (3.98,  3.98) & {\bf (4.02,  4.03)} & (4.05,  4.06) & (4.17,  4.24)\\
\hline
\\&&{\normalsize $N = 100$}\\
-2.0 	& (-3.31, -3.3) & (-2.56, -2.55) & {\bf (-2.61, -2.61)} & (-2.42, -2.35)\\
-1.5	    & (-2.49, -2.48) & (-2.30, -2.29) & {\bf (-2.31, -2.30)} & (-2.25, -2.20)\\
-1.0		& (-1.65, -1.65) & {\bf (-1.62, -1.62)} & (-1.61, -1.60) & (-1.66, -1.62)\\
-0.5		& (-0.81, -0.80) & (-0.80, -0.79) & (-0.80, -0.79) & (-0.73, -0.68)\\
0.0		& ( 0.06,  0.06) & {\bf (0.08,  0.08)} & (0.10, 0.11) & (0.24, 0.29)\\
0.5		& ( 1.00,  1.00) & {\bf (1.03,  1.04)} & (1.09,  1.10) & (1.25, 1.27)\\
1.0		& ( 1.99,  1.99) & {\bf (2.03,  2.03)} & (2.09,  2.09) & (2.26, 2.30)\\
1.5		& ( 2.98,  2.98) & {\bf (3.02,  3.03)} & (3.09,  3.09) & (3.24, 3.28)\\
2.0		& ( 3.99,  4.00) & {\bf (4.03,  4.04)} & (4.09,  4.10) & (4.23, 4.27)\\
\hline
\\&&{\normalsize $N = 200$}\\
-2.0	  	& (-3.29, -3.29) & (-2.75, -2.75) & {\bf (-2.83, -2.82)} & (-2.62, -2.56)\\
-1.5		& (-2.47, -2.47) & (-2.37, -2.36) & {\bf (-2.38, -2.37)} & (-2.38, -2.34)\\
-1.0  	& (-1.64, -1.64) & {\bf (-1.63, -1.63)} & (-1.62, -1.62) & (-1.65, -1.62)\\
-0.5 	& (-0.80, -0.80) & (-0.79, -0.79) & (-0.79, -0.79) & (-0.68, -0.65)\\
0.0  	& ( 0.06,  0.06) & {\bf (0.08,  0.08)} & (0.14,  0.14) & (0.31,  0.35)\\
0.5    	& ( 1.01,  1.01) & {\bf (1.05,  1.05)} & (1.14,  1.14) & (1.31,  1.34)\\
1.0    	& ( 2.00,  2.00) & {\bf (2.04,  2.04)} & (2.13,  2.14) & (2.30,  2.34)\\
1.5    	& ( 3.00,  3.00) & {\bf (3.04,  3.04)} & (3.14,  3.14) & (3.32,  3.35)\\
2.0    	& ( 4.00,  4.00) & {\bf (4.04,  4.04)} & (4.13,  4.14) & (4.31,  4.34)\\
\hline
\end{tabular}
\label{table:mseVsNoiseCINvariousTableAppendix}
\end{table}

\subsubsection{Generative parameter estimates}

Table~\ref{table:genParameterEstimatesCITableAppendix} shows confidence intervals for the $L^2$ norm $\Vert \hat{\theta} \Vert$ of the estimated generative linear parameter, and for the estimated noise level $\hat{\epsilon}$.
Estimated noise levels are close to the true value $\epsilon = 1$, whereas estimated norms noticeably exceed the true $\Vert \theta \Vert = 1$ (even more so for spiked covariance).

\vspace{0.5cm}
\begin{table}[hb]
\caption{Generative parameter estimates vs.\ aspect ratio. Covariance as noted, noise $\epsilon=1$, sample size $N=100$. BCa bootstrap $95\%$ confidence intervals based on $10^4$ bootstrap samples. Values aggregated over $100$ random-X repetitions for each of $100$ generative unit vectors $\theta$ (Alg.~\ref{alg:experimental-procedure-random-X}).
Runs for aspect ratios $1.6, 1.8$ ended early due to matrix conditioning.}
\setlength{\tabcolsep}{6pt}
\centering
\begin{tabular}{ccc|cc}
&\hspace{0.9cm}{\normalsize Bulk}\hspace{-0.9cm}	&&\hspace{0.9cm}{\normalsize Spiked}\hspace{-0.9cm} &\\
{\normalsize $d/N$}		&{\normalsize $\Vert \hat{\theta} \Vert$}	&{\normalsize $\hat{\epsilon}$}	&{\normalsize $\Vert \hat{\theta} \Vert$}	&{\normalsize $\hat{\epsilon}$}	\\
\hline
0.1	 & (1.09, 1.10)	& (0.99, 1.00)		& (1.11, 1.12)	 & (1.00, 1.00)\\
0.2	 & (1.26, 1.27)	& (0.99, 0.99)		& (1.43, 1.44)	 & (1.00, 1.01)\\
0.3	 & (1.45, 1.46)	& (0.99, 1.00)		& (1.70, 1.71)	 & (1.01, 1.01)\\
0.4 	 & (1.64, 1.65)	& (0.99, 0.99)		& (1.92, 1.94)	 & (1.02, 1.02)\\
0.5	 & (1.85, 1.86)	& (0.98, 0.99)		& (2.12, 2.13)	 & (1.03, 1.03)\\
0.6	 & (2.06, 2.08)	& (0.98, 0.99)		& (2.30, 2.31)	 & (1.04, 1.05)\\
0.7	 & (2.28, 2.29)	& (0.98, 0.98)		& (2.45, 2.46)	 & (1.06, 1.07)\\
0.8	 & (2.48, 2.50)	& (0.99, 0.99)		& (2.60, 2.61)	 & (1.09, 1.10)\\
0.9	 & (2.65, 2.66)	& (1.01, 1.02)		& (2.72, 2.73)	 & (1.12, 1.13)\\
1.0	 & (2.72, 2.73)	& (1.07, 1.08)		& (2.82, 2.83)	 & (1.19, 1.19)\\
1.1	 & (2.69, 2.71)	& (1.04, 1.05)		& (2.90, 2.91)	 & (1.17, 1.18)\\
1.2	 & (2.58, 2.60)	& (1.02, 1.02)		& (2.96, 2.98)	 & (1.15, 1.15)\\
1.3	 & (2.42, 2.44)	& (1.00, 1.00)		& (3.00, 3.01)	 & (1.13, 1.13)\\
1.4	 & (2.27, 2.29)	& (0.98, 0.99)		& (3.00, 3.02)	 & (1.10, 1.11)\\
1.5	 & (2.13, 2.14)	& (0.97, 0.98)		& (2.98, 3.00)	 & (1.08, 1.08)\\
1.6	 & (1.99, 2.01)	& (0.96, 0.97)		& (2.94, 2.95)	 & (1.06, 1.06)\\
1.7	 & (1.88, 1.89)	& (0.95, 0.96)		& (2.86, 2.87)	 & (1.04, 1.04)\\
1.8	 & (1.79, 1.79)	& (0.94, 0.95)		& (2.77, 2.79)	 & (1.02, 1.02)\\
1.9	 & (1.70, 1.71)	& (0.94, 0.95)		& (2.69, 2.70)	 & (1.01, 1.01)\\
\hline
\end{tabular}
\label{table:genParameterEstimatesCITableAppendix}
\end{table}

\end{document}